\theoremstyle{plain}
\newtheorem{theorem}{Theorem}
\newtheorem{proposition}[theorem]{Proposition}
\newtheorem{corollary}[theorem]{Corollary}
\newcommand{\R}{\mathbb{R}}
\newcommand{\Epos}{\mathbb{E}_{\mathrm{p}}}
\newcommand{\Eneg}{\mathbb{E}_{\mathrm{n}}}
\newcommand{\Eunl}{\mathbb{E}_{\mathrm{u}}}
\newcommand{\pitest}{\pi'}
\newcommand{\ppos}{p_{\mathrm{p}}}
\newcommand{\pneg}{p_{\mathrm{n}}}
\newcommand{\zerooneloss}{\ell_{0\text{-}1}}
\newcommand{\feature}{\boldsymbol{x}}
\newcommand\numberthis{\addtocounter{equation}{1}\tag{\theequation}}
\newcommand{\ptest}{p_\mathrm{t}}
\newcommand{\pu}{p_\mathrm{u}}
\newcommand{\alphau}{\alpha_{\mathrm{unif}}}
\newcommand{\piu}{\pi_{\mathrm{unif}}}
\newcommand{\pig}{\pi^\mathrm{g}}
\newcommand{\dbquote}[1]{``#1''}
\newcolumntype{C}{>{$}c<{$}}
\newcolumntype{L}{>{$}l<{$}}
\begin{document}
\title{Positive-Unlabeled Classification under Class Prior Shift and Asymmetric Error}
\author{Nontawat Charoenphakdee$^1$ \and Masashi Sugiyama$^{2,1}$}
\date{
$^1$ The University of Tokyo 
$^2$ RIKEN
}
\maketitle
\begin{abstract}
Bottlenecks of binary classification from positive and unlabeled data (PU classification) are the requirements that given unlabeled patterns are drawn from the test marginal distribution, and the penalty of the false positive error is identical to the false negative error. 
However, such requirements are often not fulfilled in practice. 
In this paper, we generalize PU classification to the class prior shift and asymmetric error scenarios. 
Based on the analysis of the Bayes optimal classifier, we show that given a test class prior, PU classification under class prior shift is equivalent to PU classification with asymmetric error. 
Then, we propose two different frameworks to handle these problems, namely, a risk minimization framework and density ratio estimation framework. 
Finally, we demonstrate the effectiveness of the proposed frameworks and compare both frameworks through experiments using benchmark datasets.
\end{abstract}

\section{Introduction}
Learning from positive and unlabeled data is a learning problem where we are given positive (P) patterns and unlabeled (U) patterns, but negative patterns are not given. 
This learning problem is important when obtaining negative patterns is difficult due to high labeling costs or privacy concerns.
There are many problems in this category such as binary classification (PU classification)~\citep{denis1998pac,letouzey2000learning,elkan2008learning,ward2009presence,du2014,du2015convex, kiryo2017}, inlier-based outlier detection~\citep{hido2011statistical,blanchard2010semi}, matrix completion~\citep{hsieh2015pu}, data stream classification~\citep{li2009positive}, and time-series classification~\citep{nguyen2011positive}.
In this paper, we focus on PU classification. The goal is to learn a binary classifier that achieves low generalization error on the test distribution. Some motivating examples of PU classification are problems in the field of computational biology where we have test cases to verify whether a pattern is positive, but evidence after the test does not imply that a pattern is negative. Examples are gene-disease identification~\citep{yang2012positive} and conformational B-cell epitopes classification~\citep{ren2015positive}. Not only computational biology problems but PU classification can also be useful for other applications such as land-cover classification~\citep{li2011positive} and text classification~\citep{li2003learning}.

The main challenges of PU classification are lack of information about negative patterns and how to incorporate unlabeled patterns. A naive approach is to identify negative patterns from the given unlabeled patterns, and then apply a standard binary classification method with pseudo-negative patterns and positive patterns~\citep{li2003learning}, which relies on heuristics to identify negative patterns from unlabeled patterns. 
A more recent approach is to rewrite the misclassification risk so that we can minimize the risk using only positive and unlabeled patterns~\citep{du2014,du2015convex,kiryo2017}. This risk minimization approach has been demonstrated its effectiveness in PU classification and can be considered as the current state-of-the-art method.

However, a limitation of the risk minimization framework is the assumption that the test distribution is identical to the distribution that unlabeled patterns are drawn from. Unfortunately, this assumption might not always hold in practice. 
Intuitively, this assumption means that the ratio of positive data (class prior) in the unlabeled distribution is the same as the test distribution. Practically, it is more convenient if we can also collect unlabeled patterns from the source that has a different class prior. Another limitation of the current framework is that the existing formulation is limited to the setting where a false positive error and false negative error have the same penalty (symmetric error). There are real-world applications where the false positive penalty and false negative penalty are different. For example, a cancer detection task in medical diagnosis, false negative errors are life-threatening and therefore should have a higher penalty than false positive errors. As a result, both class prior shift and asymmetric error scenarios are highly relevant and it is important to investigate such scenarios so that we can apply machine learning techniques to support more real-world applications. To the best of our knowledge, both scenarios in PU classification have not been studied extensively yet.

The goal of this paper is to enable practitioners to apply PU classification to a wider range of applications. We consider two different frameworks namely, a risk minimization framework and density ratio framework, to handle PU classification under class prior shift and asymmetric error. Our main contributions are as follows:
\begin{itemize}
\item We prove that PU classification under class prior shift and PU classification with asymmetric error are equivalent. Moreover, we can handle PU classification even if both class prior shift and asymmetric error are considered simultaneously.
\item We propose a risk minimization framework for PU classification with class prior shift and asymmetric error based on the existing framework for binary classification from unlabeled data~\citep{lu2018minimal}.
\item We propose a density ratio framework for PU classification under class prior shift and asymmetric error.
\item We compare both frameworks in terms of convexity of the formulation, the need of retraining when the test condition changes (class prior shift or asymmetric error), and experiments using benchmark datasets. 
\end{itemize}

\section{Preliminaries}
In this section, we review the ordinary PU classification problem and density ratio estimation problem.
\subsection{Ordinary PU classification}
Let $\feature \in \R^d$ be a $d$-dimensional pattern and $y \in \{-1, 1\}$ be a label. We denote the class prior $p(y={+1})$ by $\pi$. In PU classification, we are given the positive patterns $X_\mathrm{p}$ drawn from the class conditional distribution over positive data and the unlabeled patterns $X_\mathrm{u}$ drawn from the marginal distribution over unlabeled data as follows:
\begin{align*}
 X_\mathrm{p}&:= \{\feature^\mathrm{p}_i\}_{i=1}^{n_\mathrm{p}} \stackrel{\mathrm{i.i.d.}}{\sim} p(\feature|y=+1) \text{,}\\ 
 X_\mathrm{u}&:=  \{\feature^\mathrm{u}_j\}_{j=1}^{n_\mathrm{u}} \stackrel{\mathrm{i.i.d.}}{\sim} \pi p(\feature|y=+1)+(1-\pi) p(\feature|y=-1) \text{,}
\end{align*}
where $p(\feature|y)$ is the conditional density of $\feature$ given y, $n_\mathrm{p}$ is the number of positive patterns and $n_\mathrm{u}$ is the number of unlabeled patterns. The existing framework assumes that the test patterns are drawn from the same distribution as unlabeled patterns~\citep{du2014, du2015convex, kiryo2017}. Similarly to the existing framework, we assume the class prior $\pi$ is known. In practice, it can be estimated using existing class prior estimation methods~\citep{prior3, prior2}. The goal is to find a classifier $g\colon \R^d \to \R$ that minimizes the expected misclassification risk with respect to the test distribution:
\begin{align}\label{pnrisk-decomposed}
R^{\zerooneloss}(g) &= \pi \Epos \left[ \zerooneloss(g(\feature)) \right] + (1-\pi) \Eneg \left[ \zerooneloss(-g(\feature)) \right]\text{,}
\end{align}
where $\zerooneloss(z) = -\frac{1}{2} \mathrm{sign}(z) + \frac{1}{2}$ is the zero-one loss, $\Epos[\cdot] $ and $\Eneg[\cdot]$ denote the expectations over $p(\feature|y=+1)$ and $p(\feature|y=-1)$, respectively. In practice, we use a \emph{surrogate loss} $\ell$, e.g., the squared loss $\ell(z)=(1-z)^2$, or the logistic loss $\ell(z)=\log(1+\exp(-z))$, to minimize a surrogate risk $R^{\ell}(g)$ instead of $\zerooneloss$ since minimizing Eq. \eqref{pnrisk-decomposed} is computationally infeasible~\citep{bartlett2006, zeroonenphard2,zeroonenphard1}. %Loss functions in the risk minimization framework for binary classification should satisfy the classification calibration condition~\citep{bartlett2006}, which suggests that the optimal classifier can be obtained by minimizing such a loss function under appropriate assumptions.

Note that there are other settings that can also be applied for PU classification~\citep{elkan2008learning,natarajan2013learning}, where a single set of unlabeled patterns is collected first and then some of the positive patterns in the set are labeled. Here, we prefer this two-sample setting of PU classification~\citep{du2014} because we can control the number of positive patterns and unlabeled patterns independently which is more realistic for practical applications~\citep{du2014,du2015convex,niu2016}.

Since negative patterns are not given, minimizing $R^{\ell}(g)$ using positive and unlabeled patterns is not directly possible. It has been shown that it is possible to rewrite the risk term $R^{\ell}(g)$ as follows~\citep{du2015convex}:
\begin{align}\label{purisk-decomposed}
R_{\mathrm{PU}}^{\ell}(g)  = \pi \Epos \left[ \ell(g(\feature)) - \ell(-g(\feature)) \right] + \Eunl \left[ \ell(-g(\feature)) \right] \text{,}
\end{align}
where $\Eunl[\cdot] $ denotes the expectation over the marginal density $p(\feature)$. An unbiased risk estimator based on Eq.~\eqref{purisk-decomposed} does not require negative patterns can be obtained immediately by replacing the expectations with sample averages as follows:
\begin{align*}
	\widehat{R}^{\ell}_{\mathrm{PU}}(g) &= \frac{\pi}{n_\mathrm{p}} \sum_{i=1}^{n_\mathrm{p}} \left[ \ell(g(\feature^\mathrm{p}_i)) - \ell(-g(\feature^\mathrm{p}_i)) \right] + \frac{1}{n_\mathrm{u}}  \sum_{i=1}^{n_\mathrm{u}}  \ell(-g(\feature^\mathrm{u}_i)). 
\end{align*}

Moreover, it has been shown that using a linear-in-parameter model, a convex formulation can be obtained if a loss $\ell$ satisfies the linear-odd condition~\citep{du2015convex, patrini2016loss}, i.e., $\ell(z)-\ell(-z)$ is linear. 
Examples of such loss functions are the squared loss, modified huber loss, and logistic loss.  However, this empirical risk of Eq. \eqref{purisk-decomposed} can be negative and can be prone to overfitting when using a highly flexible model. One way to cope with this problem is to apply a non-negative correction which enables the use of deep neural networks in PU classification~\citep{kiryo2017} as follows:
\begin{align*}
	\widehat{R}^{\ell}_{\mathrm{nnPU}}(g) &=  \frac{\pi}{n_\mathrm{p}} \sum_{i=1}^{n_\mathrm{p}}\ell(g(\boldsymbol{x}^\mathrm{p}_i)) + \max (0, \widehat{R}_{\mathrm{PU}\text{-}{\mathrm{neg}}}(g) ),
\end{align*}
where 
\begin{align*}
\widehat{R}_{\mathrm{PU}\text{-}{\mathrm{neg}}}(g) = \frac{1}{n_\mathrm{u}} \sum_{i=1}^{n_\mathrm{u}}\ell(-g(\boldsymbol{x}^\mathrm{u}_i)) - \frac{\pi}{n_\mathrm{p}}\sum_{i=1}^{n_\mathrm{p}}\ell(-g(\boldsymbol{x}^\mathrm{p}_i)).
\end{align*}
%Nevertheless, in this paper, we focus on the unbiased estimator of Eq. \eqref{purisk-decomposed} and we leave the non-negative correction of PU classification under class prior shift and asymmetric error as a promising future work.

\subsection{Density ratio estimation}
In the density ratio estimation problem~\citep{sugiyama2012density}, we are given patterns from the two different distributions as follows: 
\begin{align*}
 X_\mathrm{nu}&:= \{\feature^\mathrm{nu}_i\}_{i=1}^{n_\mathrm{nu}} \stackrel{\mathrm{i.i.d.}}{\sim} p_\mathrm{nu}(\feature) \text{,}\\ 
 X_\mathrm{de}&:=  \{\feature^\mathrm{de}_i\}_{i=1}^{n_\mathrm{de}} \stackrel{\mathrm{i.i.d.}}{\sim} p_\mathrm{de}(\feature) \text{,}
\end{align*}
where $n_\mathrm{nu}$ and $n_\mathrm{de}$ are the numbers of patterns drawn from $p_\mathrm{nu}$ and $p_\mathrm{de}$, respectively (\dbquote{nu} indicates \dbquote{numerator} and \dbquote{du} indicates \dbquote{denominator}). The density ratio function is defined as follows:
\begin{align*}
w(\feature)=\frac{p_\mathrm{nu}(\feature)}{p_\mathrm{de}(\feature)} \text{.}
\end{align*}
 The goal of density ratio estimation is to find an accurate estimate $\widehat{w}$ of the density ratio function $w$ using patterns $X_\mathrm{nu}$ and $X_\mathrm{de}$~\citep{sugiyama2012density}. There are many existing methods that can be applied for estimating the density ratio function~\citep{kliep,ulsif,rulsif,trimratio,adityaratio}.

To the best of our knowledge, there is no experiment that applies density ratio estimation for PU classification. Nevertheless, density ratio estimation has been applied for inlier-based outlier detection~\citep{hido2011statistical} which is closely related with PU classification. It was pointed out that the important difference is that inlier-based outlier detection only produces an outlier score to evaluate the \emph{outlyingness} of patterns, while PU classification requires the threshold between positive and negative classes~\citep{du2015convex}. In this paper, we use a density ratio for PU classification under class prior shift and asymmetric error.

\section{PU classification under class prior shift}
In this section, we describe PU classification under class prior shift. The only difference from the ordinary PU classification discussed in the previous section is that the class prior of the test marginal density $\ptest$ and the unlabeled marginal density $\pu$ can be different. Formally, we can define $\pu$, $\ptest$, and related quantities as follows:
\begin{align*}
\pu(\feature) &:= \pi \ppos(\feature)  + (1-\pi)\pneg(\feature) \text{,} \\ 
\ptest(\feature) &:= \pitest\ppos(\feature) + (1-\pitest)\pneg(\feature)\text{,} 
\end{align*}
where $\ppos(\feature) := p(\feature|y=+1)$,  $\pneg(\feature) := p(\feature|y=-1)$, and $\pitest := \pi + \gamma$ for some $\gamma\in\R$ such that $0 < \pitest < 1$. If $\gamma = 0$, this problem is reduced to the ordinary PU classification where $\ptest=\pu$. In this setting, we have no access to the data in the test distribution. For this reason, it is \emph{impossible} to estimate $\pitest$. Thus, we assume that the test class prior $\pitest$ is given. In the experiment section, we also investigate the performance in the scenario where the given test class prior is incorrect. 

In binary classification, it is known that the Bayes optimal classifier, which is the classifier that achieves the minimum misclassification risk, can be written as follows~\citep{mohrifoundation}:
\begin{align} \label{bayes2}
f_{\mathrm{Bayes}}^*(\feature) =  \mathrm{sign}\left[p_{\pitest}(y=+1|\feature)-\frac{1}{2} \right],
\end{align}
where $p_{\pitest}(y=+1|\feature)$ denotes the class posterior probability which can be expressed by the Bayes' rule as follows: 
\begin{align*} 
	p_{\pitest}(y=+1|\feature) &=  \frac{\pitest \ppos(\feature)}{\ptest(\feature)}\text{.}
\end{align*}
%Note that the class posterior probability in PU classification under class prior shift can be different from that of the ordinary PU classification% depending on the training class prior $\pi$  and test class prior $\pitest$.

\section{PU classification with asymmetric error}
In this section, we consider another extension of PU classification to the setting where the false positive penalty and false negative penalty are different (asymmetric error). To the best of our knowledge, while the setting of PU classification with the symmetric error is well-studied, the asymmetric error setting has not been studied yet. Let $\alpha$ be a false positive penalty ($0 < \alpha < 1$) and $(1-\alpha)$ be a false negative penalty.\footnote{Note that the range of $\alpha$ does not restrict the applicability because we can always normalize the false positive penalty and false negative penalty to be within this range.} If $\alpha=0.5$, the problem is reduced to the symmetric error setting. It is known that the Bayes optimal classifier of the binary classification with asymmetric error can be expressed as follows~\citep{scott2012calibrated}:
\begin{align} \label{bayes-asym}
f_{\mathrm{Bayes\text{-}}\alpha}^*(\feature) =  \mathrm{sign}\bigg[p_{\pi}(y=+1|\feature)-\alpha \bigg],
\end{align}
where $p_{\pi}(y=+1|\feature)$ denotes the class posterior probability, which can be expressed as 
\begin{align*} 
	p_{\pi}(y=+1|\feature) &=  \frac{\pi \ppos(\feature)}{\pu(\feature)}\text{.}
\end{align*}

In binary classification (including PU classification) with asymmetric error, the threshold to classify a pattern is shifted (depending on $\alpha$) but the class posterior probability remains the same as the symmetric error scenario.

\section{The equivalence of PU classification under class prior shift and PU classification with asymmetric error}
\begin{figure*}[t]
\includegraphics[width=\columnwidth]{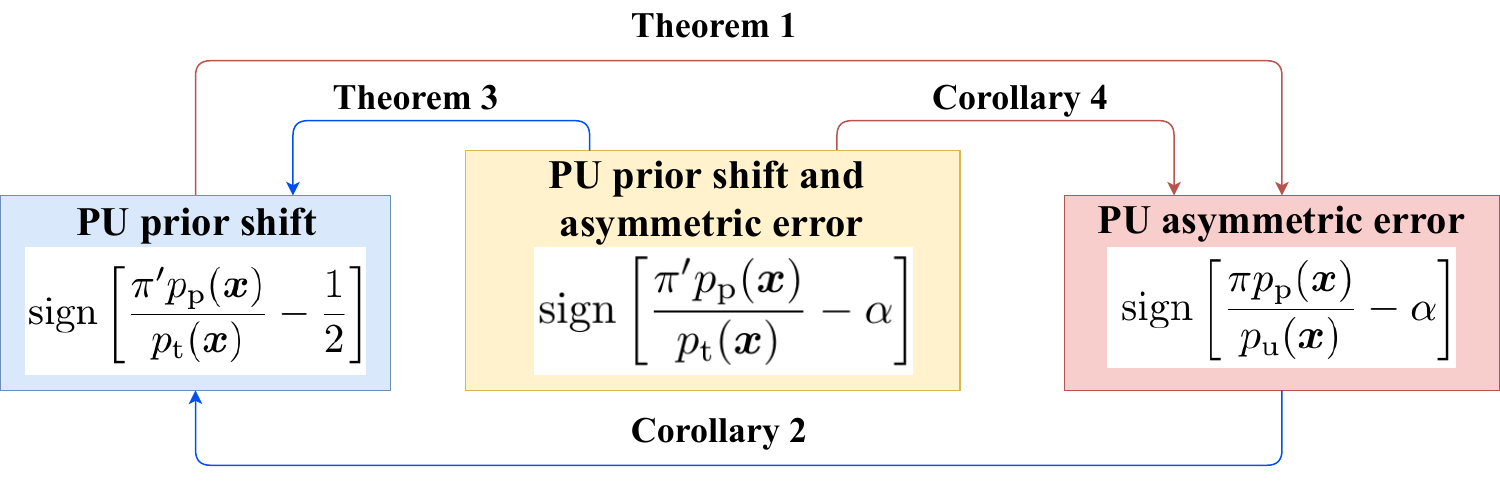}
\caption{The relationships between PU classification problems and the Bayes optimal classifiers for the problems.}
\label{fig:illustration}
\end{figure*}
In this section, we establish the connection between PU classification with asymmetric error and PU classification under class prior shift. Based on the Bayes optimal classifiers in Eq.~\eqref{bayes2} and Eq.~\eqref{bayes-asym}, the following theorem states that PU classification under class prior shift can be cast to PU classification with asymmetric error.
\begin{theorem} \label{col-shift-asym-alpha}Assume the test class prior $\pitest$ is given, the following equation holds:
\begin{align*}%\label{shift-asym-relation-1}
\mathrm{sign} \left[\frac{\pi \ppos(\feature)}{\pu(\feature)}-\alpha\right] = \mathrm{sign} \left[ {\frac{\pitest \ppos(\feature)}{\ptest(\feature)} -\frac{1}{2}} \right] \text{,}
\end{align*}
where $\alpha$ can be expressed as
\begin{align}\label{shift-asym-alpha-corollary}
\alpha = \frac{\pi-\pi\pitest}{\pitest+\pi-2\pi\pitest}\text{.}
\end{align}
\begin{proof}
Consider the misclassification risk for PU classification with asymmetric error as follows \citep{scott2012calibrated}:
\begin{align*}
R^{\zerooneloss}_{\mathrm{asym}}(g) &= \pi (1-\alpha) \Epos \left[ \zerooneloss(g(\feature) \right] + (1-\pi)\alpha \Eneg \left[ \zerooneloss(-g(\feature)) \right] \text{.} 
\end{align*}

Next, let us consider another misclassification risk for PU classification under class prior shift:
\begin{align*}
R^{\zerooneloss}_{\mathrm{shift}}(g) = \pitest \Epos \left[ \zerooneloss(g(\feature) \right] + (1-\pitest) \Eneg \left[ \zerooneloss(-g(\feature)) \right] \text{.}
\end{align*}

Then, by normalizing $ \pi (1-\alpha) + (1-\pi)\alpha$ to one and equating the asymmetric error to the misclassification risk for PU classification under class prior shift, we have
\begin{align*}
\frac{\pi (1-\alpha)}{ \pi (1-\alpha) + (1-\pi)\alpha} = \pitest\text{.}
\end{align*}
Then, solving $\alpha$ based on the given training class prior~$\pi$ and the test class prior $\pitest$, we obtain the equivalent misclassification risk and therefore both scenarios have the same optimal Bayes classifier. 
\end{proof}
\end{theorem}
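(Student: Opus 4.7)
The plan is to reduce both sides of the claimed identity to sign tests on explicit linear combinations of the two class-conditional densities $\ppos(\feature)$ and $\pneg(\feature)$, and then match coefficients to recover the stated formula for $\alpha$. First I would substitute the mixture decompositions $\pu = \pi\ppos + (1-\pi)\pneg$ and $\ptest = \pitest\ppos + (1-\pitest)\pneg$ into the two arguments. Since $\pu(\feature) > 0$ and $\ptest(\feature) > 0$, I can clear the denominators without flipping signs. On the left, the condition $\frac{\pi\ppos(\feature)}{\pu(\feature)} - \alpha \geq 0$ is equivalent to
\[
(1-\alpha)\pi\ppos(\feature) - \alpha(1-\pi)\pneg(\feature) \geq 0,
\]
and on the right the condition $\frac{\pitest\ppos(\feature)}{\ptest(\feature)} - \tfrac{1}{2} \geq 0$ reduces to
\[
\pitest\ppos(\feature) - (1-\pitest)\pneg(\feature) \geq 0.
\]

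Both inequalities are sign tests on expressions of the form $A\,\ppos(\feature) - B\,\pneg(\feature)$ with strictly positive coefficients $A,B$ (using $0<\alpha,\pi,\pitest<1$). Two such inequalities define the same decision rule pointwise in $\feature$ if and only if the coefficient pairs are positive multiples of each other, i.e. the ratios $A/B$ coincide. Equating
\[
\frac{(1-\alpha)\pi}{\alpha(1-\pi)} = \frac{\pitest}{1-\pitest}
\]
and cross-multiplying gives $\pi(1-\pitest) = \alpha\bigl[\pi(1-\pitest) + (1-\pi)\pitest\bigr] = \alpha(\pi + \pitest - 2\pi\pitest)$, which rearranges to exactly Eq.~\eqref{shift-asym-alpha-corollary}.

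The main subtlety, rather than a genuine obstacle, is justifying that equating coefficient ratios is sufficient: this requires the sign-test formulation to hold pointwise and the coefficients of $\ppos$ and $\pneg$ to be nonnegative, both of which follow from the stated ranges of $\alpha$, $\pi$, and $\pitest$. The author's proof takes a different route, matching the two weighted risks $R^{\zerooneloss}_{\mathrm{asym}}$ and $R^{\zerooneloss}_{\mathrm{shift}}$ after normalization so that they share the same minimizer; my plan works directly at the level of the Bayes-rule expressions in Eq.~\eqref{bayes2} and Eq.~\eqref{bayes-asym}, which I find slightly more transparent and avoids invoking the unnormalized asymmetric risk.
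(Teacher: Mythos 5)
Your argument is correct, and it takes a genuinely different route from the paper's. You work directly on the two sign expressions: clearing the strictly positive denominators $\pu(\feature)$ and $\ptest(\feature)$ reduces both sides to tests of the form $A\,\ppos(\feature) - B\,\pneg(\feature) \geq 0$ with $A,B>0$, and choosing $\alpha$ so that the coefficient pairs are proportional makes the two tests identical pointwise; the resulting equation $\frac{(1-\alpha)\pi}{\alpha(1-\pi)} = \frac{\pitest}{1-\pitest}$ solves to exactly Eq.~\eqref{shift-asym-alpha-corollary}, and one checks $\alpha\in(0,1)$ since the numerator $\pi(1-\pitest)$ is positive and strictly smaller than the denominator $\pi(1-\pitest)+\pitest(1-\pi)$. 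The paper instead argues at the level of risk functionals: it writes down the asymmetric risk $R^{\zerooneloss}_{\mathrm{asym}}$ and the shifted risk $R^{\zerooneloss}_{\mathrm{shift}}$, normalizes the former, and equates the positive-class weights so that the two risks coincide and hence share a Bayes-optimal classifier. The two derivations are equivalent in substance (both ultimately match the ratio of the positive and negative weights), but yours proves the stated sign identity directly and self-containedly, whereas the paper's version leans on the cited characterizations of the Bayes classifiers in Eq.~\eqref{bayes2} and Eq.~\eqref{bayes-asym} and better motivates where $\alpha$ comes from in terms of cost-sensitive risks. One small caveat: your claim that two such tests agree pointwise \emph{if and only if} the coefficient ratios coincide overstates what is needed --- the \dbquote{only if} direction depends on the range of $\pneg(\feature)/\ppos(\feature)$ --- but only the \dbquote{if} direction is used, so this does not affect correctness.
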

Similarly, we can also cast PU classification with asymmetric error to PU classification under class prior shift by using the following corollary obtained from Eq.~\eqref{shift-asym-alpha-corollary}.
\begin{corollary} \label{th-shift-asym-relation}Let $\alpha \in (0, 1)$ be a false positive penalty, and $1-\alpha$ be a false negative penalty. Then, the following equation holds:
\begin{align}\label{shift-asym-relation-2}
\mathrm{sign} \left[\frac{\pi \ppos(\feature)}{\pu(\feature)}-\alpha\right] = \mathrm{sign} \left[ {\frac{\pitest \ppos(\feature)}{\ptest(\feature)} -\frac{1}{2}} \right] \text{,}
\end{align}
where $\pitest$ can be expressed as
\begin{align*} 
\pitest =  \frac{\pi-\alpha \pi}{\pi+\alpha - 2\alpha \pi} \text{.}
\end{align*}
\end{corollary}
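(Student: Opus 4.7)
The plan is to obtain Corollary~\ref{th-shift-asym-relation} as a direct algebraic inversion of Theorem~\ref{col-shift-asym-alpha}. Theorem~\ref{col-shift-asym-alpha} established that, for any given $\pitest \in (0,1)$, the sign identity in Eq.~\eqref{shift-asym-relation-2} holds with $\alpha$ defined by Eq.~\eqref{shift-asym-alpha-corollary}. Since both scenarios have now been shown to share the same optimal Bayes classifier under this correspondence, all that remains is to verify that the map $\pitest \mapsto \alpha$ given by Eq.~\eqref{shift-asym-alpha-corollary} is invertible on the relevant domain, and to compute its inverse explicitly.

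First I would start from $\alpha(\pitest + \pi - 2\pi\pitest) = \pi - \pi\pitest$, expand to $\alpha\pitest + \alpha\pi - 2\alpha\pi\pitest = \pi - \pi\pitest$, and collect the $\pitest$ terms on the left-hand side to get $\pitest(\alpha + \pi - 2\alpha\pi) = \pi - \alpha\pi$. Dividing by $\alpha + \pi - 2\alpha\pi$ immediately yields the stated expression $\pitest = \tfrac{\pi - \alpha\pi}{\pi + \alpha - 2\alpha\pi}$. I would then briefly confirm that this division is legitimate and that $\pitest \in (0,1)$: for $\pi,\alpha \in (0,1)$ the denominator $\alpha + \pi - 2\alpha\pi = \alpha(1-\pi) + \pi(1-\alpha)$ is a sum of two strictly positive quantities, and positivity of the numerator $\pi(1-\alpha)$ together with the fact that the denominator exceeds the numerator by $\alpha(1-\pi) > 0$ gives $0 < \pitest < 1$.

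Finally, I would conclude by invoking Theorem~\ref{col-shift-asym-alpha} with this $\pitest$: plugging the derived expression into Eq.~\eqref{shift-asym-alpha-corollary} recovers the original $\alpha$, so the sign identity in Eq.~\eqref{shift-asym-relation-2} follows immediately from the theorem. There is no real obstacle here; the content is a one-line solve of a linear equation in $\pitest$, and the only care needed is the small domain check ensuring the denominator is nonzero and $\pitest$ is a valid class prior.
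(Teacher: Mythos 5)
Your proposal is correct and matches the paper's own route: the paper offers no separate proof of this corollary, simply stating that it is ``obtained from Eq.~\eqref{shift-asym-alpha-corollary}'' by inverting the $\pitest \mapsto \alpha$ map of Theorem~\ref{col-shift-asym-alpha}, which is exactly the one-line linear solve you carry out. Your added checks that the denominator $\alpha(1-\pi)+\pi(1-\alpha)$ is strictly positive and that the resulting $\pitest$ lies in $(0,1)$ are a welcome bit of rigor the paper leaves implicit.
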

With our results, we can cast PU classification under class prior shift to PU classification with asymmetric error with Eq. \eqref{shift-asym-alpha-corollary} and vice versa with Eq. \eqref{shift-asym-relation-2}.
%It can be observed that, class prior shift causes the change of the class posterior while asymmetric error causes the change of the threshold to decide whether a pattern is positive.
%
%Therefore, in the class prior shift scenario using the density ratio framework, we can equivalently utilize Eq. \eqref{shift-asym-alpha-corollary}, instead of using a linear transformation provided in Eq. \eqref{densityshift} to shift the threshold of a classifier rather than mapping a density ratio $\frac{\ptest(\feature)}{\ppos(\feature)}$ to $\frac{\pu(\feature)}{\ppos(\feature)}$. Although both approaches are equivalent, it can be different when taking the error of the estimation into account.

\section{PU classification under class prior shift and asymmetric error}
In this section, our target problem is PU classification where both class prior shift and asymmetric error are considered simultaneously. We show that the unified formulation can be given as follows: 
\begin{theorem}
Let $\alpha \in (0, 1)$ be a false positive penalty, $1-\alpha$ be a false negative penalty, and $\pitest=\pi+\gamma$ be a test class prior. Then, the following equation holds:
\begin{align}\label{shift-asym-unified}
\mathrm{sign} \left[\frac{\pitest \ppos(\feature)}{\ptest}-\alpha\right] = \mathrm{sign} \left[ {\frac{\piu \ppos(\feature)}{p_{\piu}(\feature)} -\frac{1}{2}} \right] \text{,}
\end{align}
where  $p_{\piu}(\feature) = \piu \ppos(\feature)+(1-\piu) \pneg(\feature)$, and $\piu$ is defined as 
\begin{align} \label{shift-asym-pi}
	\piu = \frac{\pitest - \alpha \pitest}{\pitest + \alpha - 2\alpha\pitest}\text{.}
\end{align}
\end{theorem}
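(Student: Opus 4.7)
The plan is to reduce this theorem to the situation already handled by Theorem~\ref{col-shift-asym-alpha} and Corollary~\ref{th-shift-asym-relation}. The left-hand side is the sign of $p_{\pitest}(y{=}{+}1\mid\feature)-\alpha$, which by Eq.~\eqref{bayes-asym} is the Bayes optimal classifier for a binary classification problem whose class prior is $\pitest$ and whose false-positive penalty is $\alpha$. In other words, if we temporarily rename the test prior $\pitest$ to play the role that $\pi$ played in Corollary~\ref{th-shift-asym-relation}, then the problem on the left is exactly ``binary classification with asymmetric error $\alpha$ at base prior $\pitest$''. Corollary~\ref{th-shift-asym-relation} then tells us that this Bayes classifier coincides with the Bayes classifier of a symmetric-error problem whose effective class prior is obtained by the formula in the corollary with $\pi$ replaced by $\pitest$. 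Substituting gives exactly Eq.~\eqref{shift-asym-pi}, so $\piu$ is the effective symmetric-error prior and the right-hand side of Eq.~\eqref{shift-asym-unified} is its Bayes optimal classifier.

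To make this self-contained, I would repeat the normalization argument used in the proof of Theorem~\ref{col-shift-asym-alpha}. First I would write the risk with both shift and asymmetric error,
\begin{align*}
R^{\zerooneloss}_{\mathrm{shift\text{-}asym}}(g) = \pitest(1-\alpha)\,\Epos[\zerooneloss(g(\feature))] + (1-\pitest)\alpha\,\Eneg[\zerooneloss(-g(\feature))],
\end{align*}
whose Bayes optimal classifier is precisely the sign expression on the left of Eq.~\eqref{shift-asym-unified}. Then I would divide by the positive constant $\pitest(1-\alpha)+(1-\pitest)\alpha$, which does not change the minimizer, and identify the normalized coefficient of $\Epos$ with a new symmetric-error prior $\piu$:
\begin{align*}
\piu = \frac{\pitest(1-\alpha)}{\pitest(1-\alpha)+(1-\pitest)\alpha} = \frac{\pitest-\alpha\pitest}{\pitest+\alpha-2\alpha\pitest},
\end{align*}
matching Eq.~\eqref{shift-asym-pi}. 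Under this $\piu$, the normalized risk equals a standard symmetric-error misclassification risk whose Bayes rule is the right-hand side of Eq.~\eqref{shift-asym-unified}, completing the proof.

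The reasoning is short because the conceptual work has already been done in Theorem~\ref{col-shift-asym-alpha}; the only genuine step is the algebraic solution for $\piu$, and the only place something could go wrong is a small but easy check that $\piu\in(0,1)$ whenever $\pitest,\alpha\in(0,1)$, so that $p_{\piu}$ is a valid density and the Bayes classifier interpretation holds. I would include one line verifying that both numerator and denominator of the formula for $\piu$ are strictly positive under the stated assumptions. No obstacle beyond this: the unified statement is really the composition of Theorem~\ref{col-shift-asym-alpha} applied to an intermediate asymmetric problem with base prior $\pitest$ rather than $\pi$.
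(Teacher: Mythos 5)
Your proposal is correct and matches the intended argument: the paper states this theorem without an explicit proof, but the evident justification is exactly the normalization argument from the proof of Theorem~\ref{col-shift-asym-alpha} with $\pi$ replaced by $\pitest$, i.e.\ dividing the combined risk by the positive constant $\pitest(1-\alpha)+(1-\pitest)\alpha$ and reading off $\piu$ as the normalized coefficient of the positive-class term. Your added check that $\piu\in(0,1)$ for $\pitest,\alpha\in(0,1)$ is a sensible (and easily verified) detail that the paper leaves implicit.
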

With Eq. \eqref{shift-asym-unified}, we can cast the target problem to PU classification with symmetric error where the training class prior is $\pi$ and the test class prior is $\piu$. Similarly to the previous section, we can also cast our target problem to PU classification with asymmetric error (without class prior shift) using the following corollary:
\begin{corollary}
Let $\alpha \in (0, 1)$ be a false positive penalty, $1-\alpha$ be a false negative penalty, $\pitest=\pi+\gamma$ be a test class prior, and $\piu$ is defined in Eq. \eqref{shift-asym-pi}. Then, the following equation holds:
\begin{align*}
\mathrm{sign} \left[ {\frac{\pitest \ppos(\feature)}{\ptest(\feature)} -\alpha}\right]   = \mathrm{sign} \left[\frac{\pi \ppos(\feature)}{\pu}-\alphau\right]\text{,}
\end{align*}
where $\alphau$ is defined as 
\begin{align}\label{alphasilverbullet}
\alphau = \frac{\pi-\pi\piu}{\piu+\pi-2\pi\piu}\text{.}
\end{align}
\end{corollary}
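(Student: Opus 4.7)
The plan is to obtain the corollary as an immediate consequence of two results already established in the paper, by chaining them together. Nothing new needs to be proved from scratch; the work is just to substitute the right quantities into the right places.

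First, I would apply the preceding theorem (Eq.~\eqref{shift-asym-unified}) to rewrite the left-hand side
\[
\mathrm{sign}\left[\frac{\pitest\,\ppos(\feature)}{\ptest(\feature)}-\alpha\right]
\;=\;
\mathrm{sign}\left[\frac{\piu\,\ppos(\feature)}{p_{\piu}(\feature)}-\tfrac{1}{2}\right],
\]
where $\piu$ is the quantity defined in Eq.~\eqref{shift-asym-pi}. This converts the joint class-prior-shift-and-asymmetric-error problem into an \emph{ordinary} (symmetric) PU classification problem whose training class prior is $\pi$ and whose test class prior is $\piu$.

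Next, I would invoke Theorem~\ref{col-shift-asym-alpha} with its test class prior role played by $\piu$ rather than $\pitest$. That theorem tells us that any pure class-prior-shift problem (training prior $\pi$, test prior $\piu$) is equivalent to an asymmetric-error problem on the \emph{training} marginal $\pu$ with threshold given by Eq.~\eqref{shift-asym-alpha-corollary} after the substitution $\pitest \leftarrow \piu$. Performing that substitution yields exactly
\[
\alphau \;=\; \frac{\pi-\pi\piu}{\piu+\pi-2\pi\piu},
\]
which matches Eq.~\eqref{alphasilverbullet}. Hence
\[
\mathrm{sign}\left[\frac{\piu\,\ppos(\feature)}{p_{\piu}(\feature)}-\tfrac{1}{2}\right]
\;=\;
\mathrm{sign}\left[\frac{\pi\,\ppos(\feature)}{\pu(\feature)}-\alphau\right].
\]
Composing the two equalities gives the claim.

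There is no real obstacle here: everything is a formal substitution, and the only point requiring mild care is keeping track of which class prior plays the role of training prior and which plays the role of test prior at each step, so that the formulas from Theorem~\ref{col-shift-asym-alpha} and from the unified theorem are applied with the correct arguments. In particular, one should verify $\piu\in(0,1)$ under the standing assumptions $\alpha\in(0,1)$ and $\pitest\in(0,1)$, so that Theorem~\ref{col-shift-asym-alpha} is legitimately applicable with $\piu$ substituted in; this is a short algebraic check on Eq.~\eqref{shift-asym-pi}.
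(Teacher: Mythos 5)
Your proof is correct and matches the route the paper intends: the corollary is obtained by composing Eq.~\eqref{shift-asym-unified} with Theorem~\ref{col-shift-asym-alpha} applied with $\piu$ in the role of the test class prior, which yields Eq.~\eqref{alphasilverbullet} directly from Eq.~\eqref{shift-asym-alpha-corollary}. Your additional check that $\piu\in(0,1)$ (clear since both numerator $\pitest(1-\alpha)$ and the remainder $\alpha(1-\pitest)$ of the denominator are positive) is a sensible touch the paper leaves implicit.
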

With our results, one can freely choose whether to solve PU classification with class prior shift from $\pi$ to $\piu$ or PU classification with asymmetric error where the false positive penalty is $\alphau$ to handle PU classification under class prior shift and asymmetric error. The relationships between problems are illustrated in Figure~\ref{fig:illustration}.
\section{Algorithms for PU classification under class prior shift and asymmetric error}
In this section, we consider two different frameworks to handle PU classification under prior shift and asymmetric error. 
%In the previous section, we show that both PU classification under class prior shift and PU classification with asymmetric error are equivalent. Thus, our proposed frameworks can be applied to both problems.
\subsection{Risk minimization framework} The goal of PU classification under class prior shift is to find a classifier~$g$ which minimizes the following misclassification risk for the test distribution:
\begin{align*}%\label{pnrisk-shift-decomposed}
R^{\zerooneloss}_{\mathrm{shift}}(g) = \pitest \Epos \left[ \zerooneloss(g(\feature)) \right] + (1-\pitest) \Eneg \left[ \zerooneloss(-g(\feature)) \right] \text{.}
\end{align*}
Our following theorem, which can be regarded as a special case of binary classification from unlabeled data~\citep{lu2018minimal}, states that we can express $R^{\ell}_{\mathrm{shift}}(g)$ using only the expectations of the positive and unlabeled data.
\begin{theorem} \label{pu-risk-shift}
The misclassification risk with a surrogate loss $R^{\ell}_{\mathrm{shift}}(g)$ can be equivalently expressed as
\begin{align*}
	R^{\ell}_{\mathrm{PU\text{-}shift}}(g) &= \Epos \left[ \pitest \ell(g(\feature)) - \frac{\pi-\pi\pitest}{1-\pi} \ell(-g(\feature)) \right] \numberthis \label{pu-expected-risk-shift} \\
	&\quad + \frac{1-\pitest}{1-\pi} \Eunl \left[ \ell(-g(\feature)) \right]. 
\end{align*}
\end{theorem}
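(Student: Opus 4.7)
The plan is to follow the same expectation-rewriting trick that gave the standard unbiased PU risk in Eq.~\eqref{purisk-decomposed}, but now applied to the test-prior-weighted risk $R^{\ell}_{\mathrm{shift}}(g)$. First I would write the target risk with the surrogate loss substituted for $\zerooneloss$:
\begin{align*}
R^{\ell}_{\mathrm{shift}}(g) = \pitest \Epos[\ell(g(\feature))] + (1-\pitest)\Eneg[\ell(-g(\feature))].
\end{align*}
The positive term is already in the desired form, so the work is entirely in rewriting the negative expectation without any access to samples from $\pneg$.

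Next I would use the defining identity $\pu(\feature) = \pi\,\ppos(\feature) + (1-\pi)\,\pneg(\feature)$, solved for the negative class-conditional density as
\begin{align*}
\pneg(\feature) = \tfrac{1}{1-\pi}\,\pu(\feature) - \tfrac{\pi}{1-\pi}\,\ppos(\feature).
\end{align*}
Substituting this into $\Eneg[\ell(-g(\feature))] = \int \ell(-g(\feature))\,\pneg(\feature)\,d\feature$ immediately expresses the negative expectation as a linear combination of $\Eunl[\ell(-g(\feature))]$ and $\Epos[\ell(-g(\feature))]$, namely
\begin{align*}
\Eneg[\ell(-g(\feature))] = \tfrac{1}{1-\pi}\Eunl[\ell(-g(\feature))] - \tfrac{\pi}{1-\pi}\Epos[\ell(-g(\feature))].
\end{align*}

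Multiplying through by $(1-\pitest)$ and using the algebraic identity $\pi(1-\pitest) = \pi - \pi\pitest$, I recover the coefficient $-\tfrac{\pi-\pi\pitest}{1-\pi}$ in front of $\Epos[\ell(-g(\feature))]$ and the coefficient $\tfrac{1-\pitest}{1-\pi}$ in front of $\Eunl[\ell(-g(\feature))]$. Putting everything back into $R^{\ell}_{\mathrm{shift}}(g)$ and grouping the two $\Epos$ terms under a single positive expectation yields the claimed expression $R^{\ell}_{\mathrm{PU\text{-}shift}}(g)$.

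Since the derivation is a direct application of the familiar PU density substitution, I do not expect any genuine obstacle; the only thing requiring care is coefficient bookkeeping, in particular verifying that the $(1-\pitest)/(1-\pi)$ and $(\pi-\pi\pitest)/(1-\pi)$ weights arise correctly and that they reduce to the standard PU rewriting of Eq.~\eqref{purisk-decomposed} in the limit $\pitest=\pi$ (where the positive-side correction becomes $-\pi\ell(-g(\feature))$ and the unlabeled-side weight becomes $1$), which provides a useful sanity check.
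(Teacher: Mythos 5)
Your proposal is correct and follows essentially the same route as the paper: the paper likewise isolates $\Eneg[\ell(-g(\feature))]$ from the mixture identity $\Eunl[\ell(-g(\feature))] = \pi\Epos[\ell(-g(\feature))] + (1-\pi)\Eneg[\ell(-g(\feature))]$ (which is just your density substitution stated at the level of expectations) and substitutes it into the $\pitest$-weighted risk. Your coefficient bookkeeping, including the identity $\pi(1-\pitest)=\pi-\pi\pitest$ and the sanity check at $\pitest=\pi$, is accurate.
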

\begin{proof}
First, we show that $(1-\pitest)\Eneg[\ell(-g(\feature))]$ can be rewritten in terms of the expectations of positive and unlabeled data. Based on the fact that $\Eunl[\ell(-g(\feature))] = \pi\Epos[\ell(-g(\feature))] + (1-\pi) \Eneg[\ell(-g(\feature))]$, we can express $\Eneg[\ell(-g(\feature))]$ as follows:
\begin{align*}
	R^{\ell}_{\mathrm{shift}}(g)  &= \pitest \Epos[\ell(g(\feature))] + (1-\pitest) \Eneg[\ell(-g(\feature))] \\
			 &= \pitest \Epos[\ell(g(\feature))] \\
			 &\quad + \frac{1-\pitest}{1-\pi}\left[  \Eunl[\ell(-g(\feature))] - \pi\Epos[\ell(-g(\feature))]   \right] \\
			 &=  \Epos [ \pitest \ell(g(\feature)) - \frac{\pi-\pi\pitest}{1-\pi} \ell(-g(\feature)) ] \\
	&\quad + \frac{1-\pitest}{1-\pi} \Eunl \left[ \ell(-g(\feature)) \right]  \\
	&= R^{\ell}_{\mathrm{PU\text{-}shift}}(g).
\end{align*}
Thus, we conclude that $R^{\ell}_{\mathrm{shift}}(g)  =  R^{\ell}_{\mathrm{PU\text{-}shift}}(g)$. 
\end{proof}

As a result, we can estimate $R^{\ell}_{\mathrm{PU\text{-}shift}}(g)$ from positive and unlabeled patterns. The unbiased estimator of the risk can be expressed as
% Let an empirical estimate $\widehat{R}^{\ell}_{\mathrm{shift}}(g)$ of $R^{\ell}_{\mathrm{shift}}(g)$ as follows:
% \begin{align*}
% 	\widehat{R}^{\ell}_{\mathrm{PU\text{-}shift}}(g) &= \sum_{i=1}^{\npos} [ \pitest \ell(g(\feature_i)) - \frac{\pi-\pi\pitest}{1-\pi} \ell(-g(\feature)) ] \\
% 	&\quad + \frac{1-\pitest}{1-\pi} \sum_{j=1}^{\nunl} \left[ \ell(-g(\feature)) \right]. \numberthis \label{pu-expected-risk-shift}
% \end{align*}
\begin{align*}
	\widehat{R}^{\ell}_{\mathrm{PU\text{-}shift}}(g) &= \frac{1}{n_\mathrm{p}} \sum_{i=1}^{n_\mathrm{p}} \left[ \pitest \ell(g(\feature^\mathrm{p}_i)) - \frac{\pi-\pi\pitest}{1-\pi} \ell(-g(\feature^\mathrm{p}_i)) \right] \\
	&\quad + \frac{1}{n_\mathrm{u}}  \frac{1-\pitest}{1-\pi}  \sum_{i=1}^{n_\mathrm{u}}  \ell(-g(\feature^\mathrm{u}_i)). 
\end{align*}
This estimator is an unbiased estimator that the consistency and convergence rate of learning can be theoretically guaranteed~\citep{lu2018minimal}. Here, unlike the existing work which focused on learning from unlabeled data~\citep{lu2018minimal}, we are interested in PU classification under class prior shift and provide more results for the risk minimization framework approach for our problem setting. Similarly to the non-negative risk estimator for ordinary PU classification~\citep{kiryo2017}, we can obtain the non-negative risk estimator for PU classification under class prior shift follows:
\begin{align*}
	\widehat{R}^{\ell}_{\mathrm{nnPU}\text-\mathrm{Sh}}(g) &=  \frac{\pitest}{n_\mathrm{p}} \sum_{i=1}^{n_\mathrm{p}}\ell(g(\boldsymbol{x}^\mathrm{p}_i)) +  \frac{1-\pitest}{1-\pi} \max (0, \widehat{R}_{\mathrm{PU}\text{-}{\mathrm{neg}}}(g) ).
\end{align*}

Next, an interesting question is whether a convex formulation can still be obtained similarly to the ordinary PU classification \citep{du2015convex} for the risk $R^{\ell}_{\mathrm{PU\text{-}shift}}$ in Eq.~\eqref{pu-risk-shift}. Here, we show that convexity can be guaranteed \emph{when the training class prior is less than or equal to the class prior of the test data}, i.e., $\pi \leq \pitest$ ($\gamma \geq 0$).
\begin{theorem}
 Let $\ell$ be a convex loss such that $\ell(z)-\ell(-z)$ is linear and the training class prior is less than or equal to test class prior, i.e., $\pi \leq \pitest$ ($\gamma \geq 0$). With a linear-in-parameter model for $g$, $R^{\ell}_{\mathrm{PU\text{-}shift}}(g)$ is convex.
\end{theorem}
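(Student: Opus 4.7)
The plan is to mirror the convexity argument of du2015convex for ordinary PU classification, but keep careful track of how the shifted class prior $\pitest$ affects the coefficient of the positive term. Because $g$ is linear-in-parameter, $\feature \mapsto g(\feature)$ is affine in the parameter vector for each fixed $\feature$, so $\ell(g(\feature))$ and $\ell(-g(\feature))$ are convex in the parameters whenever $\ell$ is convex, and the expectations preserve convexity. The only potential source of non-convexity is therefore the coefficient in front of $\ell(-g(\feature))$ appearing in the positive expectation, which has a negative sign.

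I would first exploit the linear-odd assumption to write $\ell(-z)=\ell(z)-h(z)$ for some affine $h$, and substitute this identity into the positive expectation in Eq.~\eqref{pu-expected-risk-shift}. After routine simplification, the coefficient of $\Epos[\ell(g(\feature))]$ collapses to
\[
\pitest - \frac{\pi(1-\pitest)}{1-\pi} \;=\; \frac{\pitest-\pi}{1-\pi} \;=\; \frac{\gamma}{1-\pi},
\]
and the leftover $h$-term is linear in $g$, hence affine in the parameters.

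At this point the risk can be written in the form
\[
R^{\ell}_{\mathrm{PU\text{-}shift}}(g) \;=\; \frac{\gamma}{1-\pi}\,\Epos\!\left[\ell(g(\feature))\right] \;+\; \frac{1-\pitest}{1-\pi}\,\Eunl\!\left[\ell(-g(\feature))\right] \;+\; A(g),
\]
where $A(g)$ is affine in the parameters. Under the hypotheses $0<\pi\le\pitest<1$ and $0<\pi<1$, both coefficients $\gamma/(1-\pi)$ and $(1-\pitest)/(1-\pi)$ are nonnegative, so each of the two expectation terms is a nonnegative combination of convex functions and hence convex; adding the affine remainder preserves convexity, giving the conclusion.

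The only delicate point is the sign of the coefficient $\gamma/(1-\pi)$: if $\pitest<\pi$ this coefficient becomes negative, and the positive expectation contributes a concave term that generally cannot be cancelled by the unlabeled term, so the convexity proof breaks down exactly at the boundary stated in the theorem. This is what I expect to be the main obstacle, and addressing it is precisely what the assumption $\gamma\ge 0$ is doing; everything else reduces to the algebra above and invoking the standard facts that affine precomposition and nonnegative-weighted expectation preserve convexity.
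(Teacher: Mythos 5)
Your proof is correct and follows essentially the same route as the paper: both exploit the linear-odd condition to absorb the negatively-weighted $\Epos[\ell(-g(\feature))]$ term into an affine remainder and then check that the coefficients of the remaining convex terms, $\gamma/(1-\pi)$ and $(1-\pitest)/(1-\pi)$, are nonnegative precisely when $\gamma \ge 0$. The only cosmetic difference is that you apply the linear-odd identity once more to collapse the positive-data term to $\frac{\gamma}{1-\pi}\Epos[\ell(g(\feature))]$, whereas the paper stops at $\gamma\,\Epos\!\left[\ell(g(\feature)) + \frac{\pi}{1-\pi}\ell(-g(\feature))\right]$; the two forms are algebraically equivalent and both correctly locate the failure of the argument at $\gamma < 0$.
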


\begin{proof}
We prove using the following facts:
\begin{align*}
\ell(z) - \ell(-z) &= -z \text{,}\\
	\pitest &= \pi + \gamma \text{,}\\
    0 & < \pitest, \pi < 1 \text{,}\\
    -\pi & < \gamma < 1-\pi \text{.}
\end{align*}
For the analysis purpose, let us rewrite Eq. \eqref{pu-expected-risk-shift} as 
\begin{align*}
	R^{\ell}_{\mathrm{PU\text{-}shift}}(g) &= \pitest \Epos[\ell(g(\feature))] + (1-\pi-\gamma) \Eneg[\ell(-g(\feature))] \\
 &=   \pitest \Epos[\ell(g(\feature))] + \Eunl[\ell(-g(\feature))] \\ &\quad - \pi \Epos[\ell(-g(\feature))] - \gamma \Eneg[\ell(-g(\feature))] \\
 &= \Epos[\pitest \ell(g(\feature)) - \pi \ell(-g(\feature))] + \Eunl[\ell(-g(\feature))]\\ &\quad   -\frac{\gamma}{1-\pi} [\Eunl[\ell(-g(\feature))] - \pi \Epos[\ell(-g(\feature))]] \\
 &= -\pi \Epos[g(\feature)] + \Epos[\gamma \ell(g(\feature))+ \frac{\gamma \pi}{1-\pi}\ell(-g(\feature))] \\ &\quad   + (1-\frac{\gamma}{1-\pi})\Eunl[\ell(-g(\feature))].
\end{align*}
Then, we can express $R^{\ell}_{\mathrm{PU\text{-}shift}}(g)$ as follows:
\begin{align*}
R^{\ell}_{\mathrm{PU\text{-}shift}}(g) &=-\pi \Epos[g(\feature)] + \gamma \Epos[\ell(g(\feature))   + \frac{\pi}{1-\pi}\ell(-g(\feature))]  \numberthis \label{eq:troublesome} \\ 
&\quad + (1-\frac{\gamma}{1-\pi})\Eunl[\ell(-g(\feature))]  \text{.}  
\end{align*}
Note that we used the linear-odd condition $\Epos[\pi \ell(g(\feature)) - \pi \ell(-g(\feature))] = - \pi \Epos[g(\feature)]$ ~\citep{du2015convex}. On the right-hand side of Eq. \eqref{eq:troublesome}, the first term $ -\pi \Epos[g(\feature)] $ is linear. The second term is convex because $\gamma$ is positive and the terms inside the expectation operator are convex. The third term  $(1-\frac{\gamma}{1-\pi})\Eunl[\ell(-g(\feature))]$ is convex because $(1-\frac{\gamma}{1-\pi})$ is always non-negative and $\ell$ is convex. Thus, $R^{\ell}_{\mathrm{PU\text{-}shift}}(g)$ is convex since the sum of convex functions is convex~\citep{boyd2004convex}. This concludes the proof. 
% The problem lies in the second term $\gamma \Epos[\ell(g(\feature))+ \frac{\pi}{1-\pi}\ell(-g(\feature))]$.  The terms inside the expectation of the second term are always convex since a sum of convex functions is convex. However, the sign of the whole term depends on $\gamma$. If $\gamma$ is negative, which implies $\pi > \pitest $, this term becomes concave and the empirical risk objective cannot be guaranteed to be convex. On the other hand, if $\gamma$ is positive, means $\pi < \pitest$, we can safely obtain the convex optimization problem. This evidence shows that the convexity of the formulation is affect by $\gamma$ (the difference between the test class prior and the unlabeled class prior).
\end{proof}

On the other hand, if $\pi > \pitest $, the problem lies in the second term of the right-hand side of Eq. \eqref{eq:troublesome} which is $\gamma \Epos[\ell(g(\feature))+ \frac{\pi}{1-\pi}\ell(-g(\feature))]$.  The terms inside the expectation are convex. However, the convex term is multiplied by $\gamma$. If $\pi > \pitest $, $\gamma$ is negative. This causes the second term of the right-hand side of Eq. \eqref{eq:troublesome} to be concave and the convexity of the whole formulation cannot be guaranteed.

Next, with the similar procedures, we can obtain similar results for PU classification with asymmetric error. The proofs of the following two theorems are given in Appendix.
\begin{theorem} Let $\alpha \in (0, 1)$ be a false positive error and $1-\alpha$ be a false negative error. The misclassification risk for the asymmetric error can be equivalently expressed as \label{th:appdx1}
\begin{align*}%\label{purisk-asym}
	R^{\ell}_{\mathrm{PU\text{-}asym}}(g) &= \pi \Epos \left[ (1-\alpha) \ell(g(\feature)) - \alpha \ell(-g(\feature)) \right] \\	
		& \quad+ \alpha \Eunl \left[ \ell(-g(\feature)) \right] \text{.}%\numberthis
\end{align*}
\end{theorem}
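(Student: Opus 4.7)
The plan is to mirror the argument used in the proof of Theorem \ref{pu-risk-shift}, since the risk in the asymmetric-error setting has the same structural form as the class-prior-shift risk, just with different coefficients. Concretely, I will start from the standard asymmetric misclassification risk
\begin{align*}
R^{\ell}_{\mathrm{asym}}(g) = \pi(1-\alpha)\Epos[\ell(g(\feature))] + (1-\pi)\alpha\Eneg[\ell(-g(\feature))],
\end{align*}
which is the natural surrogate analogue of the $\zerooneloss$ risk used in the proof of Theorem \ref{col-shift-asym-alpha}. The objective is to eliminate the negative expectation $\Eneg$, which is not accessible in the PU setting, and re-express the risk using only $\Epos$ and $\Eunl$.

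The key identity is the decomposition of the unlabeled expectation into class-conditional pieces,
\begin{align*}
\Eunl[\ell(-g(\feature))] = \pi\Epos[\ell(-g(\feature))] + (1-\pi)\Eneg[\ell(-g(\feature))],
\end{align*}
which immediately gives $(1-\pi)\Eneg[\ell(-g(\feature))] = \Eunl[\ell(-g(\feature))] - \pi\Epos[\ell(-g(\feature))]$. Substituting this into the second term of $R^{\ell}_{\mathrm{asym}}(g)$ (after multiplying through by $\alpha$), I get $\alpha\Eunl[\ell(-g(\feature))] - \alpha\pi\Epos[\ell(-g(\feature))]$. Grouping the two positive-data expectations together then yields
\begin{align*}
R^{\ell}_{\mathrm{asym}}(g) = \pi\Epos\bigl[(1-\alpha)\ell(g(\feature)) - \alpha\ell(-g(\feature))\bigr] + \alpha\Eunl[\ell(-g(\feature))],
\end{align*}
which is exactly $R^{\ell}_{\mathrm{PU\text{-}asym}}(g)$.

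Because each step is just an algebraic rearrangement using a single identity, there is no real obstacle to overcome; the argument is a direct analogue of the class-prior-shift derivation, and the only thing to be careful about is correctly tracking the coefficients $\pi(1-\alpha)$, $(1-\pi)\alpha$, and $\alpha$ so that the final grouping into a single $\Epos[\cdot]$ term comes out with the right factors. The result is an unbiased rewriting of the asymmetric-error risk in terms of positive and unlabeled expectations only, which can then be estimated by sample averages in the same manner as in the symmetric case.
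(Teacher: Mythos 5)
Your proposal is correct and follows essentially the same route as the paper's own proof: both start from the surrogate asymmetric risk $\pi(1-\alpha)\Epos[\ell(g(\feature))] + (1-\pi)\alpha\Eneg[\ell(-g(\feature))]$, use the identity $\Eunl[\ell(-g(\feature))] = \pi\Epos[\ell(-g(\feature))] + (1-\pi)\Eneg[\ell(-g(\feature))]$ to eliminate the negative-class expectation, and regroup the positive-data terms. No gaps; the argument matches the paper's.
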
 
\begin{theorem}  \label{th:appdx2}
 Let $\ell$ be a convex loss such that $\ell(z)-\ell(-z)$ is linear and the false positive penalty is less than or equal to the false negative penalty, i.e., $\alpha \le 0.5$. With a linear-in-parameter model for $g$, $R^{\ell}_{\mathrm{PU\text{-}asym}}(g)$ is convex.
\end{theorem}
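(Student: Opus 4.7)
The plan is to start from the expression for $R^{\ell}_{\mathrm{PU\text{-}asym}}(g)$ given by Theorem~\ref{th:appdx1} and to use the linear-odd condition $\ell(z)-\ell(-z)=-z$ (the same identity exploited in the proof of the previous convexity theorem) to eliminate the problematic negative-coefficient term $-\pi\alpha\,\Epos[\ell(-g(\feature))]$. The strategy is completely analogous to the shift case: rewrite the risk as a sum of (i)~a non-negatively weighted convex term, (ii)~a linear term from the linear-odd identity, and (iii)~another non-negatively weighted convex term, and then invoke closure of convexity under addition.

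Concretely, I would first expand
\begin{align*}
R^{\ell}_{\mathrm{PU\text{-}asym}}(g) = \pi(1-\alpha)\,\Epos[\ell(g(\feature))] - \pi\alpha\,\Epos[\ell(-g(\feature))] + \alpha\,\Eunl[\ell(-g(\feature))].
\end{align*}
Then I would substitute $\ell(-g(\feature)) = \ell(g(\feature)) + g(\feature)$ inside the positive expectation (only), which turns the troublesome $-\pi\alpha\,\Epos[\ell(-g(\feature))]$ into $-\pi\alpha\,\Epos[\ell(g(\feature))] - \pi\alpha\,\Epos[g(\feature)]$. Combining coefficients of $\Epos[\ell(g(\feature))]$ yields
\begin{align*}
R^{\ell}_{\mathrm{PU\text{-}asym}}(g) = \pi(1-2\alpha)\,\Epos[\ell(g(\feature))] \;-\; \pi\alpha\,\Epos[g(\feature)] \;+\; \alpha\,\Eunl[\ell(-g(\feature))].
\end{align*}

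With this decomposition the convexity argument becomes immediate under the hypothesis $\alpha\le 0.5$: the coefficient $\pi(1-2\alpha)$ is non-negative, so the first term is a non-negative combination of the convex maps $g\mapsto \ell(g(\feature))$ (convex because $\ell$ is convex and $g$ is linear in its parameters); the second term is linear in the parameters of $g$; and the third term has the non-negative coefficient $\alpha$ multiplying a convex map. Summing the three gives a convex function, which concludes the proof.

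The main obstacle, and the reason the hypothesis $\alpha\le 0.5$ is needed, lies in step where $\pi(1-2\alpha)$ must be non-negative; if $\alpha>0.5$ then this coefficient flips sign and the $\Epos[\ell(g(\feature))]$ term becomes concave, mirroring exactly the $\gamma<0$ failure mode identified after Eq.~\eqref{eq:troublesome}. Apart from this sign bookkeeping, the proof is a routine application of the linear-odd trick and closure of convexity under non-negative combinations, as in Boyd and Vandenberghe.
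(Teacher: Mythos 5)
Your proof is correct and follows essentially the same route as the paper: both extract the linear part via the linear-odd identity $\ell(-z)=\ell(z)+z$ and reduce the argument to checking that the coefficient of the remaining $\Epos$-term is non-negative when $\alpha\le 0.5$. Your decomposition $\pi(1-2\alpha)\Epos[\ell(g(\feature))]-\pi\alpha\Epos[g(\feature)]+\alpha\Eunl[\ell(-g(\feature))]$ is algebraically identical to the paper's grouping via $\alpha=0.5+\beta$ (where the condition appears as $\beta\le 0$ multiplying $-\beta\pi\Epos[\ell(g(\feature))+\ell(-g(\feature))]$), so the two arguments coincide up to bookkeeping.
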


It can be observed that $\alpha$ determines the convexity of the formulation. If $\alpha \leq 0.5$, the convex formulation can be obtained. However, a convex formulation may not be obtained when  $\alpha>0.5$, i.e., we cannot guarantee the convexity of the formulation when the false positive penalty is higher than the false negative penalty.

\subsection{Density ratio framework}
Here, we propose a density ratio framework for PU classification under class prior shift and asymmetric error. First, we can reduce any problem in Figure~\ref{fig:illustration} to PU classification with asymmetric error $\alphau$ using Eq. \eqref{alphasilverbullet}. The Bayes optimal classifier for the asymmetric error scenario with $\alphau$ can be given as 
\begin{align} \label{eq:classtoratio}
f_{\mathrm{Bayes\text{-}}\alphau}^*(\feature) =  \mathrm{sign}\bigg[ \frac{\pi \ppos(\feature)}{\pu(\feature)}-\alphau \bigg].
\end{align}
Next, using the fact that we only consider the sign, another formulation can be obtained as follows:
\begin{align}\label{eq:ratiotodiff}
	 f_{\mathrm{Bayes\text{-}}\alphau}^*(\feature) =  \mathrm{sign}\left[\frac{\pi}{\alphau}  -\frac{\pu(\feature)}{\ppos(\feature)}\right] \text{,}
\end{align}
where the right-hand side of Eq. \eqref{eq:classtoratio} is multiplied by the positive value $\frac{\pu(\feature)}{ \ppos(\feature)\alphau}$.
%\begin{align}\label{eq:ratiotodiff2}
% 	f_{\mathrm{Bayes\text{-}}\alphau}^*(\feature) =  \mathrm{sign}\left[\frac{\pi}{\alphau} \ppos(\feature) - \pu(\feature)\right] \text{.}
%\end{align}
%Note that both terms on the right-hand side of Eqs. \eqref{eq:classtoratio}, \eqref{eq:ratiotodiff} are equal. %, and \eqref{eq:ratiotodiff2} are equal. The right-hand side of Eq. \eqref{eq:ratiotodiff2} might be interesting to explore with a method based on density-difference estimation \citep{sugiyama2013density} to directly estimate this quantity.

% In PU classification under class prior shift, we can set $\alpha$ based on Eq. \eqref{shift-asym-alpha-corollary}. Because $\alpha$ is not related to the part where the density ratio is computed, the test class prior does not affect the convexity of the formulation. By using a linear-in-parameter model, a convex formulation of density ratio estimation can be obtained via a method such as uLSIF \citep{ulsif} and KLIEP \citep{kliep}. Moreover, one advantage of the density ratio framework is that unlike risk minimization framework, retraining is not needed when we want to deploy a classifier to other environments which have different class priors, or we want to adjust the asymmetric error. Because these modifications only affect $\alpha$, not the density ratio $\frac{\ppos(\feature)}{\pu(\feature)}$ or $\frac{\pu(\feature)}{\ppos(\feature)}$.

After obtaining the training class prior $\pi$, PU classification based on density ratio estimation can be applied with the following steps: First, estimate $\frac{\ppos(\feature)}{\pu(\feature)}$. Second, calculate $\alphau$ using Eq. \eqref{alphasilverbullet}. Finally, we classify each test pattern by using Eq. \eqref{eq:classtoratio}. Another option is to estimate $\frac{\pu(\feature)}{\ppos(\feature)}$ and use Eq. \eqref{eq:ratiotodiff}. Clearly, the change of test class priors is not related to the step of calculating the density ratio but only the step of calculating $\alphau$. Therefore, the class prior shift and asymmetric error scenarios do not affect the convexity of the formulation. More precisely, the convexity of the density ratio framework entirely depends on how we estimate the density ratio. For example, by using a linear-in-parameter model, a convex formulation of density ratio estimation can be obtained via existing density ratio estimation methods such as the Kullback-Leibler importance estimation procedure~\citep{kliep} or unconstrained least-squares importance fitting (uLSIF)~\citep{ulsif}. 

Note that although Eq. \eqref{eq:classtoratio} and Eq. \eqref{eq:ratiotodiff} are equal if the estimated density ratio is perfectly accurate, the quality of estimators obtained in each formulation can be different in practice. One might argue that using the density ratio framework might not be effective because it is unstable since the density ratio can be unbounded and causes the estimation to be unreliable. However, in PU classification, our following simple proposition shows the boundedness of the density ratios in PU classification:
\begin{proposition} \label{proposition:dr}
$\frac{\ppos(\feature)}{\pu(\feature)}$ and $\frac{\pu(\feature)}{\ppos(\feature)}$ are both bounded as follows:
\begin{align*}
0 &< \frac{\ppos(\feature)}{\pu(\feature)} < \frac{1}{\pi} \text{,} \\
\pi &< \frac{\pu(\feature)}{\ppos(\feature)} \text{.}
\end{align*}
\end{proposition}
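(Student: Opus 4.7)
The plan is to substitute the definition $\pu(\feature) = \pi \ppos(\feature) + (1-\pi)\pneg(\feature)$ directly into the two ratios and then read off each bound using only nonnegativity of the class-conditional densities together with $0 < \pi < 1$; no further machinery is needed.

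First I would dispose of the lower bound $\pi < \pu(\feature)/\ppos(\feature)$. Dividing the definition of $\pu(\feature)$ through by $\ppos(\feature)$ (which is positive on the region of interest) gives
\begin{align*}
\frac{\pu(\feature)}{\ppos(\feature)} = \pi + (1-\pi)\frac{\pneg(\feature)}{\ppos(\feature)}.
\end{align*}
Since $1-\pi > 0$ and $\pneg(\feature) \geq 0$, with strict positivity on the support of the negative class, this quantity is strictly greater than $\pi$, yielding the second claimed inequality.

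Next, taking reciprocals of the same identity immediately produces the upper bound $\ppos(\feature)/\pu(\feature) < 1/\pi$, and the remaining lower bound $0 < \ppos(\feature)/\pu(\feature)$ is immediate from $\ppos(\feature) \geq 0$ combined with $\pu(\feature) > 0$.

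The main (and essentially only) obstacle is being explicit about the support assumption $\pneg(\feature) > 0$ needed for the strict inequality; without it one obtains only $\leq$ rather than $<$. Under the standard implicit assumption that the negative class has nonzero density at the point $\feature$ at which the density ratio is evaluated, the argument reduces to two lines of arithmetic on a convex combination.
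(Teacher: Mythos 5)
Your argument is correct and is in substance the same as the paper's: the paper obtains the upper bound from $p(y=+1|\feature)=\pi\,\frac{\ppos(\feature)}{\pu(\feature)}\in[0,1]$, which is exactly your mixture decomposition $\pu(\feature)=\pi\,\ppos(\feature)+(1-\pi)\,\pneg(\feature)$ read through Bayes' rule, followed by the same reciprocal step. Your explicit observation that the \emph{strict} inequalities require $\pneg(\feature)>0$ at the evaluation point (otherwise one only gets $\le$ and $\ge$) is a detail the paper glosses over, but it does not change the fact that the two arguments are essentially identical.
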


For $\frac{\ppos(\feature)}{\pu(\feature)}$, the denominator contains all the supports from the numerator. The density ratio $\frac{\ppos(\feature)}{\pu(\feature)}$ is upper-bounded by $\frac{1}{\pi}$ since $p(y=+1|\feature) = \pi \frac{\ppos(\feature)}{\pu(\feature)}$ and $p(y=+1|\feature) \in [0,1]$. On the other hand, $\frac{\pu(\feature)}{\ppos(\feature)}$ is  unbounded. Thus, Proposition \ref{proposition:dr} suggests that density ratio estimation in PU classification with $\frac{\ppos(\feature)}{\pu(\feature)}$ can potentially be more suitable. We illustrate the advantage of using the density ratio $\frac{\ppos(\feature)}{\pu(\feature)}$ in the experiment section. 

In addition, one can determine the density ratio of the test distribution $\frac{\ptest(\feature)}{\ppos(\feature)}$ from the training data based on the following theorem for the class prior shift scenario.
\begin{theorem} \label{th:drshift}
Let $\pitest=\pi+\gamma$ be a test class prior, $\frac{\ptest(\feature)}{\ppos(\feature)}$  can be expressed as
\begin{align*}
\frac{\ptest(\feature)}{\ppos(\feature)} =  \frac{\gamma}{1-\pi} + (1-\frac{\gamma}{1-\pi}) \frac{\pu(\feature)}{\ppos(\feature)} \text{.} %\numberthis \label{densityshift}
\end{align*}
\end{theorem}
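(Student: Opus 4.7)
The plan is to eliminate $\pneg(\feature)$ from the expression for $\ptest(\feature)$ by using the defining convex combination of $\pu(\feature)$. Since $\pu(\feature) = \pi \ppos(\feature) + (1-\pi)\pneg(\feature)$ and $0<\pi<1$, we can invert this relation to obtain $\pneg(\feature) = \frac{\pu(\feature) - \pi\ppos(\feature)}{1-\pi}$. This is the only structural step: once $\pneg$ is written in terms of $\ppos$ and $\pu$, the rest is algebra.

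Next, I would substitute this expression into $\ptest(\feature) = (\pi+\gamma)\ppos(\feature) + (1-\pi-\gamma)\pneg(\feature)$, then divide both sides by $\ppos(\feature)$ to isolate the density ratio $\frac{\ptest(\feature)}{\ppos(\feature)}$ on the left and $\frac{\pu(\feature)}{\ppos(\feature)}$ on the right. At this stage the right-hand side has the form $(\pi+\gamma) + \frac{1-\pi-\gamma}{1-\pi}\bigl(\frac{\pu(\feature)}{\ppos(\feature)} - \pi\bigr)$, so it remains to collect the constant and ratio parts.

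The constant term simplifies to $\frac{(\pi+\gamma)(1-\pi) - \pi(1-\pi-\gamma)}{1-\pi} = \frac{\gamma}{1-\pi}$ after cancellations of the $\pi$, $\pi^2$, and $\pi\gamma$ contributions, while the coefficient of the density ratio is $\frac{1-\pi-\gamma}{1-\pi} = 1 - \frac{\gamma}{1-\pi}$. Combining these yields exactly the claimed identity. There is no real obstacle here: the argument is a one-line substitution followed by routine algebraic bookkeeping, and the assumption $0<\pi<1$ (inherited from the problem setup) ensures the division by $1-\pi$ is legitimate. The only thing to be careful about is keeping the signs of $\gamma$ and the constraint $0<\pitest<1$ implicit, since the identity is purely formal and does not require $\gamma\ge 0$.
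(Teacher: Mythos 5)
Your proposal is correct and follows essentially the same route as the paper: both eliminate $\pneg(\feature)$ via the identity $(1-\pi)\pneg(\feature) = \pu(\feature) - \pi\ppos(\feature)$ and then reduce the result by direct algebra to $\frac{\gamma}{1-\pi} + (1-\frac{\gamma}{1-\pi})\frac{\pu(\feature)}{\ppos(\feature)}$. Your bookkeeping (solving for $\pneg$ first, then substituting and collecting the constant and ratio coefficients) is a slightly cleaner organization of the same computation the paper carries out term by term.
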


The proof is given in Appendix. We can see that $\frac{\ptest(\feature)}{\ppos(\feature)}$ can be obtained with an affine transformation of $\frac{\pu(\feature)}{\ppos(\feature)}$. Thus, we can obtain $\frac{\ptest(\feature)}{\ppos(\feature)}$ by estimating  $\frac{\pu(\feature)}{\ppos(\feature)}$ given that we can access unlabeled patterns and positive patterns. %Again, it is recommended to estimate for $\frac{\ppos(\feature)}{\pu(\feature)}$ and then use the reciprocal term because it is bounded as suggested by Proposition~\ref{proposition:dr}. 

\section{Comparison between a risk minimization framework and density ratio framework}
In this section, we compare our two proposed frameworks in terms of the convexity of the formulation and the necessity to retrain a classifier when the test condition changes. Without loss of generality, we cast PU classification under class prior shift and asymmetric error to PU classification with asymmetric error with $\alphau$ using Eq.~\eqref{alphasilverbullet}. 
%\item Convexity when a linear-in-parameter model is used.

\noindent \textbf{Convexity using a linear-in-parameter model:}
The convexity of the risk minimization framework depends on $\alphau$. The convex formulation can be obtained if $\alphau \leq 0.5$ using a convex loss that satisfies the linear-odd condition according to Theorem \ref{th:appdx2}. On the other hand, irrespective of $\alphau$, the density ratio framework always yields a convex formulation with convex density ratio estimation methods such as uLSIF~\citep{ulsif}.

\noindent \textbf{Retraining when the test condition changes:}
The risk minimization framework requires to minimize the surrogate risk with respect to the new class prior every time when the test condition changes (class prior shift or asymmetric error). On the other hand, we do not need to recalculate the density ratio when the test condition changes. We simply need to adjust the parameter $\alphau$, which can be calculated by Eq.~\eqref{alphasilverbullet}. Thus, from the viewpoint of the need to retrain the classifier, the density ratio framework is more suitable in the situation where the test conditions frequently change.
%\noindent \textbf{Choice of loss functions:}
%Loss functions in the risk minimization framework for binary classification should satisfy the classification calibration condition~\citep{bartlett2006}, which suggests that the optimal classifier can be obtained by minimizing such a loss function under appropriate assumptions. Many well-known loss functions satisfy this property, e.g., the squared loss and hinge loss. On the other hand, we can estimate the density ratio using a loss function that is capable of estimating the class posterior probability (also known as proper composite losses)~\citep{adityaratio}. It is known that every proper composite loss is classification-calibrated but the converse might not hold~\citep{reid2010}. Thus, the choice of loss functions for the risk minimization framework may be more flexible.

\section{Experimental results}
In this section, we present the experimental results of PU classification under class prior shift using benchmark datasets. 

\noindent \textbf{Implementation:} For the density ratio framework, we used unconstrained least-squares important fitting (uLSIF)~\citep{ulsif} with Gaussian kernel bases. 
We implemented density ratio estimation based on  Eq.~\eqref{eq:classtoratio} by computing the ratio $\frac{\ppos(\feature)}{\pu(\feature)}$ ($\frac{p}{u}\text{uLSIF}$) and Eq.~\eqref{eq:ratiotodiff} by computing the ratio $\frac{\pu(\feature)}{\ppos(\feature)}$ ($\frac{u}{p}\text{uLSIF}$), respectively. 
For the risk minimization framework, we used the squared loss (Sq) and double hinge loss (DH)~\citep{du2015convex}. 
We implemented the linear-in-parameter model using input features directly~(Lin) and using Gaussian kernel bases (Ker), respectively. For fairness between the kernel model and linear model using input features directly, we used the same number of hyperparameter candidates for each algorithm. The risk minimization objectives were optimized using AMSGRAD~\citep{adam_amsgrad} and the experiment code was implemented using Chainer~\citep{tokui2015chainer}.

\begin{table}[t]
\centering
\caption{Mean accuracy and standard error over 10 trials for PU classification,  where  $\pi = 0.3$ and $\pi^*=0.5$. Outperforming methods are highlighted in boldface using one-sided t-test with the significance level 5\%.}
\scalebox{0.65}{\begin{tabular} { |L|L|L|L|L|L|L|L|L|L|L|L|L|L|}
\hline
\text{Dataset} & \pi_{\mathrm{g}}^*& \frac{u}{p}\text{uLSIF}& \frac{p}{u}\text{uLSIF}& \text{DH-Lin}& \text{DH-Ker}& \text{Sq-Lin}& \text{Sq-Ker}\\ \hline
\text{banana} &  &85.2 (0.4) & \textbf{87.2 (0.7)} &52.2 (1.4) &83.6 (0.9) &53.1 (1.4) & \textbf{86.2 (0.6)}\\ 
\text{ijcnn1} &  &67.6 (0.8) & \textbf{70.9 (0.9)} &67.6 (1.1) &52.3 (0.7) & \textbf{73.0 (1.0)} &57.4 (0.6)\\ 
\text{MNIST} & \pi^* &71.8 (0.4) &83.6 (0.5) &83.0 (0.9) & \textbf{84.7 (0.6)} &69.1 (0.9) & \textbf{86.1 (0.7)}\\ 
\text{susy} & 0.5&71.2 (0.4) & \textbf{73.9 (0.5)} &70.7 (0.4) &65.7 (0.7) & \textbf{75.2 (0.7)} &69.0 (0.6)\\ 
\text{cod-rna} &  &84.6 (0.6) &84.0 (0.5) & \textbf{85.9 (0.4)} &80.6 (0.8) & \textbf{86.1 (0.4)} &83.8 (0.6)\\ 
\text{magic} &  &63.1 (0.7) &73.0 (0.5) & \textbf{76.5 (0.7)} &67.3 (1.0) & \textbf{76.6 (0.8)} &73.3 (0.7)\\ 
\hline
\text{banana} & &84.2 (0.4) & \textbf{88.3 (0.6)} &52.7 (0.6) &75.8 (1.6) &50.6 (0.3) &84.7 (0.8)\\ 
\text{ijcnn1} &  &58.1 (0.4) & \textbf{65.4 (0.6)} &50.4 (0.1) &50.0 (0.0) &61.0 (0.4) &50.6 (0.2)\\ 
\text{MNIST} &  &70.2 (0.6) &83.5 (0.5) &82.3 (1.0) &82.3 (0.4) &69.6 (0.8) & \textbf{85.1 (0.5)}\\ 
\text{susy} & 0.4 &69.1 (0.5) & \textbf{73.4 (0.4)} &65.7 (0.5) &52.2 (0.6) &71.8 (0.5) &66.7 (0.8)\\ 
\text{cod-rna} & &83.4 (0.6) &84.1 (0.6) & \textbf{86.0 (0.3)} &80.6 (1.1) & \textbf{86.3 (0.4)} &83.9 (0.6)\\ 
\text{magic} &  &61.8 (0.6) &73.5 (0.7) & \textbf{76.2 (0.6)} &61.7 (0.9) & \textbf{76.7 (0.8)} &70.9 (0.7)\\ 
\hline
\text{banana} & &82.6 (0.5) & \textbf{86.3 (0.7)} &52.5 (0.5) &65.8 (2.0) &50.0 (0.0) &79.9 (1.2)\\ 
\text{ijcnn1} &  & \textbf{54.3 (0.3)} & \textbf{54.9 (0.5)} &50.1 (0.0) &50.0 (0.0) &53.4 (0.3) &50.0 (0.0)\\ 
\text{MNIST} & \pi &68.1 (0.6) &78.8 (0.5) & \textbf{82.0 (0.9)} &71.8 (0.8) &70.6 (0.6) & \textbf{80.7 (0.5)}\\ 
\text{susy} & 0.3 & \textbf{67.3 (0.8)} & \textbf{68.6 (0.7)} &54.8 (0.6) &50.2 (0.1) &66.5 (0.6) &60.0 (0.5)\\ 
\text{cod-rna} &  &80.0 (0.9) &82.5 (0.6) & \textbf{84.8 (0.3)} &70.5 (2.0) & \textbf{84.2 (0.4)} &81.4 (0.7)\\ 
\text{magic} &  &58.3 (0.6) &69.1 (0.7) & \textbf{72.9 (0.4)} &50.0 (0.0) &68.7 (0.8) &60.8 (0.8)\\ 
\hline
\end{tabular}}
\label{table:result1}
\end{table}

\begin{table}[t]
\centering
\caption{Mean accuracy and standard error over 10 trials for PU classification, where  $\pi = 0.7$ and $\pi^*=0.3$. Outperforming methods are highlighted in boldface using one-sided t-test with the significance level 5\%.}
\scalebox{0.65}{\begin{tabular} { |L|L|L|L|L|L|L|L|L|L|L|L|L|L|}
\hline
\text{Dataset} & \pi_{\mathrm{g}}^*& \frac{u}{p}\text{uLSIF}& \frac{p}{u}\text{uLSIF}& \text{DH-Lin}& \text{DH-Ker}& \text{Sq-Lin}& \text{Sq-Ker}\\ \hline
\text{banana} &  &83.0 (1.0) & \textbf{86.4 (0.5)} &70.2 (0.5) &78.3 (1.0) &70.0 (0.0) &83.4 (0.4)\\ 
\text{ijcnn1} &  &70.8 (0.6) & \textbf{74.2 (0.7)} &70.0 (0.1) &69.8 (0.2) &71.5 (0.3) &69.2 (0.5)\\ 
\text{MNIST} & \pi^* &79.3 (0.5) & \textbf{81.7 (0.5)} &74.0 (1.1) & \textbf{82.4 (1.0)} &52.3 (1.4) & \textbf{83.4 (0.9)}\\ 
\text{susy} & 0.3 &74.3 (0.5) & \textbf{76.0 (0.3)} &72.7 (0.6) &70.0 (0.0) & \textbf{75.5 (1.4)} &74.7 (0.7)\\ 
\text{cod-rna} &  &82.1 (1.0) &82.8 (0.8) & \textbf{87.3 (0.7)} &77.3 (0.8) & \textbf{85.2 (1.1)} &80.2 (1.0)\\ 
\text{magic} &  &71.5 (0.7) & \textbf{75.8 (0.6)} &72.7 (1.1) &70.8 (0.4) & \textbf{75.0 (1.0)} &72.9 (0.7)\\ 
\hline
\text{banana} & &84.7 (1.1) & \textbf{88.7 (0.7)} &54.9 (1.4) &81.7 (1.6) &53.6 (1.2) &83.8 (1.3)\\ 
\text{ijcnn1} &  & \textbf{64.9 (1.4)} & \textbf{66.6 (1.0)} &60.4 (1.4) &51.6 (3.0) &62.2 (1.2) &48.2 (2.8)\\ 
\text{MNIST} & 0.5 &81.9 (0.4) & \textbf{84.1 (0.6)} &72.5 (1.0) &82.5 (0.7) &52.9 (1.1) &81.9 (0.9)\\ 
\text{susy} &  & \textbf{75.9 (1.1)} & \textbf{77.0 (0.6)} &67.5 (1.4) &75.5 (0.6) &71.6 (1.0) &72.8 (1.1)\\ 
\text{cod-rna} &  & \textbf{85.3 (0.7)} & \textbf{85.4 (0.5)} & \textbf{86.2 (0.7)} &80.1 (1.1) & \textbf{86.5 (0.9)} &81.2 (1.2)\\ 
\text{magic} &  &67.6 (0.8) & \textbf{73.6 (0.9)} & \textbf{72.6 (0.7)} &62.4 (1.9) & \textbf{71.8 (0.7)} &68.9 (0.8)\\ 
\hline
\text{banana} &  & \textbf{80.6 (1.3)} & \textbf{82.1 (1.1)} &31.8 (0.9) &48.9 (1.5) &30.0 (0.0) &69.9 (1.1)\\ 
\text{ijcnn1} &  &35.2 (1.4) & \textbf{42.4 (0.9)} &30.0 (0.0) &30.0 (0.0) &32.4 (0.5) &30.9 (0.4)\\ 
\text{MNIST} & \pi & \textbf{79.9 (0.7)} &72.6 (0.6) &71.1 (1.1) &64.8 (1.1) &64.0 (0.6) &74.2 (1.0)\\ 
\text{susy} & 0.7 &35.6 (3.1) & \textbf{44.2 (2.9)} &30.0 (0.0) &30.0 (0.0) & \textbf{42.0 (1.5)} &36.8 (1.3)\\ 
\text{cod-rna} &  & \textbf{77.7 (2.2)} & \textbf{77.8 (2.1)} & \textbf{79.6 (0.7)} &67.8 (0.8) &78.2 (0.5) &68.3 (1.0)\\ 
\text{magic} &  &51.6 (0.3) & \textbf{60.3 (1.5)} & \textbf{56.2 (2.7)} &32.8 (0.7) & \textbf{58.7 (1.4)} &50.1 (1.6)\\ 
\hline
\end{tabular}}
\label{table:result2}
\end{table}
In summary, our proposed methods successfully improved the accuracy of PU classification by taking the difference between the training and test class priors into account. Moreover, the density ratio framework based on Eq.~\eqref{eq:classtoratio} is observed to be more robust when the given test class prior is incorrect, which implies the usefulness in PU classification under class prior shift. Sq-Lin and DH-Lin performed well but they cannot be used for the data that is difficult to separate with a linear hyperplane such as the \emph{banana} dataset. 
\begin{table}[t]
\centering
\caption{Mean accuracy and standard error over 10 trials for PU classification \emph{without} class prior shift and the correct class priors were given. Outperforming methods are highlighted in boldface using one-sided t-test with the significance level~5\%.}
\scalebox{0.65}{\begin{tabular} { |L|L|L|L|L|L|L|L|L|L|L|L|L|L|}
\hline
\text{Dataset} & \pi & \frac{u}{p}\text{uLSIF}& \frac{p}{u}\text{uLSIF}& \text{DH-Lin}& \text{DH-Ker}& \text{Sq-Lin}& \text{Sq-Ker}\\ \hline
\text{banana} &  &87.9 (0.5) & \textbf{90.1 (0.6)} &69.9 (0.5) &79.2 (1.1) &70.0 (0.0) &86.9 (0.7)\\ 
\text{ijcnn1} &  &71.6 (0.2) & \textbf{72.9 (0.4)} &70.1 (0.0) &70.0 (0.0) &71.7 (0.2) &70.0 (0.0)\\ 
\text{MNIST} & 0.3 &77.9 (0.6) & \textbf{86.0 (0.4)} &83.0 (0.8) &82.6 (0.6) &71.6 (0.7) & \textbf{86.8 (0.3)}\\ 
\text{susy} &  & \textbf{78.4 (0.5)} & \textbf{79.5 (0.5)} &73.0 (0.4) &70.2 (0.1) & \textbf{79.2 (0.3)} &75.0 (0.3)\\ 
\text{cod-rna} & &85.9 (0.7) &87.4 (0.6) & \textbf{88.9 (0.4)} &80.6 (1.3) & \textbf{88.6 (0.4)} &86.6 (0.7)\\ 
\text{magic} &  &70.3 (0.2) &76.7 (0.5) & \textbf{78.1 (0.4)} &70.0 (0.0) & \textbf{78.2 (0.5)} &74.1 (0.5)\\ 
\hline
\text{banana} &  &86.8 (0.7) & \textbf{89.3 (0.6)} &61.1 (0.4) &81.9 (1.2) &60.8 (0.3) &87.3 (0.5)\\ 
\text{ijcnn1} &  &66.0 (0.4) & \textbf{71.3 (0.5)} &60.1 (0.1) &60.0 (0.0) &67.8 (0.7) &59.9 (0.3)\\ 
\text{MNIST} & 0.4 &74.2 (0.8) &83.3 (0.5) &81.5 (0.6) &84.1 (0.4) &69.2 (0.5) & \textbf{86.0 (0.4)}\\ 
\text{susy} &  &73.7 (0.6) & \textbf{75.5 (0.5)} &71.6 (0.5) &62.0 (0.6) & \textbf{74.8 (0.3)} &73.0 (0.4)\\ 
\text{cod-rna} &  &84.9 (0.3) &85.8 (0.3) & \textbf{87.4 (0.4)} &80.3 (1.1) & \textbf{88.0 (0.5)} &84.6 (0.5)\\ 
\text{magic} &  &68.1 (0.6) &75.0 (0.6) &76.2 (0.4) &67.5 (0.8) & \textbf{77.7 (0.5)} &73.0 (0.6)\\ 
\hline
\text{banana} &  &85.2 (0.7) & \textbf{87.6 (0.5)} &51.6 (1.4) &83.0 (0.9) &51.1 (1.6) &85.2 (0.7)\\ 
\text{ijcnn1} &  &65.5 (0.6) & \textbf{68.4 (1.1)} &66.5 (0.9) &50.4 (0.5) & \textbf{70.6 (1.3)} &56.0 (0.7)\\ 
\text{MNIST} & 0.5 &73.4 (0.9) &83.9 (0.5) &81.5 (0.6) & \textbf{84.3 (0.7)} &69.7 (0.6) & \textbf{85.8 (0.6)}\\ 
\text{susy} &  &71.2 (0.5) & \textbf{73.5 (0.4)} &70.1 (0.6) &66.3 (0.7) & \textbf{73.1 (0.6)} &70.7 (0.7)\\ 
\text{cod-rna} &  &84.8 (0.6) &84.8 (0.5) & \textbf{86.2 (0.5)} &78.8 (1.3) & \textbf{86.3 (0.4)} &81.7 (1.2)\\ 
\text{magic} &  &64.1 (1.0) & \textbf{74.2 (0.8)} & \textbf{74.0 (0.8)} &68.5 (1.0) & \textbf{74.1 (0.7)} &71.7 (0.6)\\ 
\hline
\text{banana} &  &86.2 (0.6) & \textbf{88.6 (0.4)} &68.1 (1.1) &77.4 (0.6) &70.0 (0.0) &85.5 (0.7)\\ 
\text{ijcnn1} &  &69.8 (0.1) & \textbf{71.2 (0.5)} &70.0 (0.0) &70.0 (0.0) & \textbf{70.9 (0.2)} &68.9 (0.5)\\ 
\text{MNIST} & 0.7 &81.0 (0.8) & \textbf{84.6 (0.6)} &81.4 (0.6) &83.2 (0.4) &68.3 (0.7) & \textbf{85.5 (0.6)}\\ 
\text{susy} &  & \textbf{71.6 (0.6)} & \textbf{72.8 (0.9)} &70.0 (0.0) &70.0 (0.0) & \textbf{71.8 (0.6)} & \textbf{71.5 (0.3)}\\ 
\text{cod-rna} &  &83.6 (0.8) &83.0 (0.8) & \textbf{85.9 (0.5)} &78.9 (0.7) & \textbf{86.3 (0.5)} &78.7 (0.7)\\ 
\text{magic} &  &76.0 (0.5) & \textbf{78.2 (0.6)} & \textbf{77.6 (0.8)} &71.2 (0.3) & \textbf{78.4 (0.7)} &76.5 (0.6)\\ 
\hline
\end{tabular}}
\label{table:result3}
\end{table}

\noindent \textbf{Datasets:} For each dataset, training data composed of 500 positive patterns and 2,000 unlabeled patterns were used.  
The test data composed of 500 patterns was used and the proportion of positive and negative patterns depends on the test class prior. 
The datasets we used can be found in LIBSVM~\citep{dataset2} and UCI Machine Learning Repository~\citep{dataset3}. 
For the MNIST dataset, we used even digits as positive patterns and odd digits as negative patterns. 
Two different pairs of the training class prior and test class prior are considered: ($\pi$, $\pitest$) = ($0.3$, $0.5$) and ($\pi$, $\pitest$) = ($0.7$, $0.3$). 
We also investigated the performance when the given test class prior is incorrect. We define $\pig$ as the given test class prior (which may not be equal to $\pitest$). 
Additional experimental results can be found in Appendix for different class priors. 
If $\pig = \pi$, the method becomes the ordinary PU classification method but we evaluated the performance in the test environment where the test class prior is $\pitest$. 
Moreover, we also assume that the training class prior is given in order to illustrate the effect of the incorrect test class prior more precisely. In practice, one can estimate the training class prior by using existing class prior estimation methods~\citep{prior3, prior2}. 

\noindent \textbf{Discussion:} 
Tables~\ref{table:result1} and \ref{table:result2} show the results of PU classification under class prior shift with varying class priors. Table~\ref{table:result3} shows the results of ordinary PU classification. We can interpret the experimental results as follows: First, the performance of all algorithms can be improved by taking the differing test class priors into account which is the main objective of this paper. 
Second, we can see that $\frac{p}{u}\text{uLSIF}$ significantly outperformed $\frac{u}{p}\text{uLSIF}$ in almost all cases. This illustrates the advantage of estimating $\frac{\ppos(\feature)}{\pu(\feature)}$ over $\frac{\pu(\feature)}{\ppos(\feature)}$ due to the boundedness as suggested in Proposition~\ref{proposition:dr}. 
Third, $\frac{p}{u}\text{uLSIF}$ can be observed to be more robust than the risk minimization framework when the given test class prior is incorrect. 
Fourth, the Gaussian kernel-based linear-in-parameter model for the risk minimization framework (Sq-Ker, DH-Ker) did not work well in our experiments compared with other methods. In addition, we also tried using a linear-in-parameter model with input features directly for the density ratio framework. We found that the density ratio method based on this model failed miserably, while Gaussian kernels substantially improved the performance.

\section{Conclusions}
We investigated the binary classification problem from positive and unlabeled data (PU classification) under class prior shift and asymmetric error. We proved the equivalence of PU classification under class prior shift, PU classification with asymmetric error, and PU classification where both class prior shift and asymmetric error occur simultaneously. To handle such scenarios, we considered a risk minimization framework and density ratio framework. We provided the analysis of convexity and the comparison of the two frameworks. The experiments illustrated that our proposed frameworks successfully improved the accuracy of PU classification under class prior shift scenario.
\section{Acknowledgements}
We thank Han Bao and Seiichi Kuroki for helpful discussion. NC was supported by MEXT scholarship and MS was supported by JST CREST JPMJCR1403.

% \bibliographystyle{plainnat}

%%%%%%%%%%%%%%%%%%%%%%%%%%%%%%
%%%     APPENDIX
%%%%%%%%%%%%%%%%%%%%%%%%%%%%%%
\newpage
\appendix
\allowdisplaybreaks
\section{Proof of Theorem~\ref{th:appdx1}}
\begin{proof}
The existing work on binary classification with asymmetric error \citep{scott2012calibrated} showed that if a loss $\ell$ is classification calibrated \citep{bartlett2006}, we can use a surrogate loss to handle the binary classification with asymmetric error. More specifically, we can use a classification-calibrated loss to minimize the following risk: 
\begin{align*}
R^{\ell}_{\mathrm{asym}}(g) &= \pi (1-\alpha) \Epos \left[ \ell(g(\feature) \right] \\
&\quad + (1-\pi)\alpha \Eneg \left[ \ell(-g(\feature)) \right]  \text{.} 
\end{align*}

Next, similar to the class prior shift, we show that $(1-\pi)\alpha\Eneg[\ell(-g(\feature))]$ can be rewritten to consist of the expectation of positive data and unlabeled data. Based on the fact that $\Eunl[\ell(-g(\feature))] = \pi\Epos[\ell(-g(\feature))] + (1-\pi) \Eneg[\ell(-g(\feature))]$, we can express $\Eneg[\ell(-g(\feature))]$ as follows:
\begin{align*}
	  \Eneg[\ell(-g(\feature))] &= \frac{ \Eunl[\ell(-g(\feature))] - \pi\Epos[\ell(-g(\feature))]}{1-\pi} \text{.}
\end{align*}

Then, by plugging in the rewritten version of $ \Eneg[\ell(-g(\feature))]$, we have
\begin{align*}
R^{\ell}_{\mathrm{asym}}(g) &= \pi (1-\alpha) \Epos \left[ \ell(g(\feature) \right] \\
&\quad + \alpha\Eunl[\ell(-g(\feature))] - \alpha \pi\Epos[\ell(-g(\feature))]  \text{.} \\
\end{align*}
Then, we can obtain
\begin{align*}%\label{purisk-asym}
	R^{\ell}_{\mathrm{PU\text{-}asym}}(g) &= \pi \Epos \left[ (1-\alpha) \ell(g(\feature)) - \alpha \ell(-g(\feature)) \right] \\	
		& \quad+ \alpha \Eunl \left[ \ell(-g(\feature)) \right] \text{.}%\numberthis
\end{align*}
Thus, we conclude the proof. 
\end{proof}
\section{Proof of Theorem~\ref{th:appdx2}}
\begin{proof}
Similarly to the class prior shift scenario, we can rewrite the result in Theorem \ref{th:appdx1} to analyze the convexity of the problem when using a linear-in-parameter model and a convex loss that satisfies the linear-odd condition. Let $\alpha = 0.5 + \beta$, we can rewrite the risk of PU classification with asymmetric error as follows:
\begin{align*}\label{purisk-asym-analyze}
	R^{\ell}_{\mathrm{PU\text{-}asym}}(g) &=\frac{\pi}{2} \Epos[\ell(g(\feature)) - \ell(-g(\feature))] + \alpha \Eunl[\ell(-g(\feature))] \numberthis\\
	     &\quad - \beta\pi \Epos[\ell(g(\feature))+\ell(-g(\feature))] \text{.} 
\end{align*}

In Eq. \eqref{purisk-asym-analyze}, the choice of $\alpha$ can determine the convexity condition of the formulation. if $\alpha \leq 0.5$ ($\beta \leq 0$) , the convex formulation can be obtained while a convex formulation may not be guaranteed when  $\alpha>0.5$ ($\beta > 0$). More precisely, we cannot guarantee the convexity of the formulation of the task where the false positive error is higher than a false negative error. 
\end{proof}
\section{Proof of Theorem~\ref{th:drshift}}
\begin{proof} We can derive the result in the following steps:
\begin{align*}
	\frac{\ptest(\feature)}{\ppos(\feature)} &= \frac{(\pi+\gamma)\ppos(\feature) + (1-\pi-\gamma) \pneg(\feature)}{\ppos(\feature)} \\
    &= (\pi+\gamma)\frac{\ppos(\feature)}{\ppos(\feature)} + (1-\pi-\gamma)\frac{\pneg(\feature)}{\ppos(\feature)}\\
    &= \pi+\gamma+ (1-\pi)\frac{\pneg(\feature)}{\ppos(\feature)} - \gamma\frac{\pneg(\feature)}{\ppos(\feature)} \\
    &= \pi + \gamma+ \frac{\pu(\feature)}{\ppos(\feature)} - \pi \frac{\ppos(\feature)}{\ppos(\feature)} \\
    & \quad - \frac{\gamma}{1-\pi} ( \frac{\pu(\feature)}{\ppos(\feature)} - \frac{\pi \ppos(\feature)}{\ppos(\feature)}) \\
    &= \pi + \gamma+ \frac{\gamma \pi}{1-\pi} - \pi + \frac{\pu(\feature)}{\ppos(\feature)} \\
    &\quad - \frac{\gamma}{1-\pi}\frac{\pu(\feature)}{\ppos(\feature)} \\
    &= \gamma+ \frac{\gamma \pi}{1-\pi} + (1-\frac{\gamma}{1-\pi}) \frac{\pu(\feature)}{\ppos(\feature)} \\
    &= \frac{\gamma}{1-\pi} + (1-\frac{\gamma}{1-\pi}) \frac{\pu(\feature)}{\ppos(\feature)} \text{.}
\end{align*} 
Thus, $\frac{\ptest(\feature)}{\ppos(\feature)}$ can be obtained with an affine transformation of the density ratio of the training unlabeled data and positive data $\frac{\pu(\feature)}{\ppos(\feature)}$. 
\end{proof}

\section{Additional experimental results}
In this section, we present the experimental results for the different class priors from the main paper. The experiment settings are identical to the experiment section. Four additional different pairs of the training class prior and test class prior were considered: ($\pi$, $\pitest$) = \{($0.5$, $0.3$), ($0.3$, $0.7$), ($0.4$, $0.8$), ($0.8$, $0.4$)\}. Tables 4-7 indicate the mean accuracy and standard error of PU classification under class prior shift. %Table 7 shows the result of the ordinary PU classification without class prior shift ($\pi=0.8$ is not provided because of the insufficient number of data).

\begin{table*}
\centering
\caption{Mean accuracy and standard error over 10 trials for PU classification, where  $\pi = 0.5$ and $\pitest=0.3$. Outperforming methods are highlighted in boldface using one-sided t-test with the significance level 5\%.}
\scalebox{0.8}{\begin{tabular} { |L|L|L|L|L|L|L|L|L|L|L|L|L|L|}
\hline
\text{Dataset} & \pig& \frac{u}{p}\text{uLSIF}& \frac{p}{u}\text{uLSIF}& \text{DH-Lin}& \text{DH-Ker}& \text{Sq-Lin}& \text{Sq-Ker}\\ \hline
\text{banana} &  & \textbf{87.4 (0.8)} & \textbf{88.2 (0.5)} &70.6 (0.5) &79.3 (1.2) &70.0 (0.0) &87.0 (0.4)\\ 
\text{ijcnn1} &  &72.2 (0.3) & \textbf{74.3 (0.5)} &70.1 (0.0) &70.0 (0.0) &71.3 (0.3) &70.1 (0.1)\\ 
\text{MNIST} & \pitest &78.2 (0.7) &84.5 (0.5) &80.4 (0.6) &83.3 (0.5) &55.7 (0.7) & \textbf{85.8 (0.4)}\\ 
\text{susy} & 0.3 & \textbf{78.8 (0.3)} & \textbf{78.3 (0.6)} &72.6 (0.6) &70.0 (0.0) & \textbf{78.7 (0.7)} &75.7 (0.6)\\ 
\text{cod-rna} &  &85.2 (0.7) & \textbf{87.4 (0.7)} & \textbf{88.5 (0.4)} &82.5 (1.3) & \textbf{87.8 (0.4)} &85.1 (0.9)\\ 
\text{magic} &  &70.7 (0.4) & \textbf{75.9 (0.4)} & \textbf{75.4 (0.9)} &70.1 (0.1) & \textbf{77.0 (0.6)} &74.9 (0.7)\\ 
\hline
\text{banana} &  &87.8 (0.7) & \textbf{89.8 (0.5)} &69.4 (1.0) &85.8 (0.6) &70.2 (0.5) & \textbf{88.3 (0.8)}\\ 
\text{ijcnn1} &  & \textbf{73.3 (0.6)} & \textbf{74.9 (0.8)} &70.0 (0.0) &70.0 (0.0) & \textbf{74.0 (0.5)} &69.3 (0.4)\\ 
\text{MNIST} &  &78.3 (1.2) & \textbf{86.3 (0.6)} &80.2 (0.6) & \textbf{86.2 (0.4)} &62.4 (0.7) & \textbf{86.4 (0.4)}\\ 
\text{susy} & 0.4 & \textbf{78.9 (0.4)} & \textbf{79.0 (0.7)} & \textbf{78.0 (0.6)} &71.9 (0.5) & \textbf{79.3 (0.6)} &76.5 (0.4)\\ 
\text{cod-rna} &  &87.0 (0.5) &87.5 (0.4) & \textbf{88.7 (0.5)} &80.3 (1.6) & \textbf{88.5 (0.4)} &83.7 (1.8)\\ 
\text{magic} &  &70.1 (0.5) & \textbf{76.0 (0.9)} & \textbf{74.7 (0.8)} &72.9 (0.9) & \textbf{76.1 (0.7)} & \textbf{74.8 (0.4)}\\ 
\hline
\text{banana} &  & \textbf{87.9 (0.8)} & \textbf{88.8 (0.5)} &51.9 (1.4) &83.0 (1.8) &51.3 (1.5) &85.6 (0.7)\\ 
\text{ijcnn1} &  & \textbf{73.4 (0.9)} &69.9 (1.1) &62.7 (1.6) &53.9 (4.2) & \textbf{71.3 (1.5)} &57.0 (1.4)\\ 
\text{MNIST} & \pi &77.7 (1.8) & \textbf{84.5 (0.6)} &79.9 (0.6) & \textbf{84.5 (0.6)} &68.1 (0.7) & \textbf{85.7 (0.6)}\\ 
\text{susy} & 0.5& \textbf{78.2 (0.6)} &76.5 (0.6) &72.6 (1.2) &75.5 (0.4) & \textbf{76.7 (0.9)} &74.0 (0.9)\\ 
\text{cod-rna} &  & \textbf{86.9 (0.5)} &86.0 (0.5) & \textbf{87.3 (0.4)} &80.3 (1.8) & \textbf{87.6 (0.4)} &82.5 (1.4)\\ 
\text{magic} &  &67.4 (1.1) & \textbf{72.0 (1.0)} & \textbf{72.3 (0.8)} &66.2 (1.6) & \textbf{71.5 (0.8)} & \textbf{70.8 (0.7)}\\ 
\hline
\end{tabular}}
\end{table*}

\begin{table*}
\centering
\caption{Mean accuracy and standard error over 10 trials for PU classification, where  $\pi = 0.3$ and $\pitest=0.7$. Outperforming methods are highlighted in boldface using one-sided t-test with the significance level 5\%.}
\scalebox{0.8}{\begin{tabular} { |L|L|L|L|L|L|L|L|L|L|L|L|L|L|}
\hline
\text{Dataset} & \pig& \frac{u}{p}\text{uLSIF}& \frac{p}{u}\text{uLSIF}& \text{DH-Lin}& \text{DH-Ker}& \text{Sq-Lin}& \text{Sq-Ker}\\ \hline
\text{banana} &  & \textbf{87.7 (0.5)} & \textbf{87.9 (0.3)} &67.4 (0.9) &81.8 (0.7) &70.0 (0.0) & \textbf{87.1 (0.4)}\\ 
\text{ijcnn1} &  & \textbf{71.2 (0.3)} & \textbf{71.7 (0.3)} &70.0 (0.0) &70.0 (0.0) &70.9 (0.3) &70.0 (0.0)\\ 
\text{MNIST} & \pitest &77.5 (0.7) &82.5 (0.6) &82.8 (0.5) &82.2 (0.4) &69.0 (1.0) & \textbf{87.1 (0.4)}\\ 
\text{susy} & 0.7 & \textbf{75.6 (0.4)} & \textbf{75.9 (0.5)} &70.0 (0.0) &70.0 (0.0) &73.0 (0.2) &71.9 (0.4)\\ 
\text{cod-rna} &  & \textbf{85.8 (0.7)} &84.7 (0.4) & \textbf{86.2 (0.5)} &78.6 (0.4) & \textbf{86.0 (0.5)} &80.5 (0.5)\\ 
\text{magic} &  &77.0 (0.7) & \textbf{79.0 (0.5)} & \textbf{78.5 (0.6)} &71.1 (0.2) &78.0 (0.4) &76.7 (0.3)\\ 
\hline
\text{banana} &  &83.5 (0.5) & \textbf{88.4 (0.4)} &54.3 (1.5) &83.7 (0.6) &53.8 (1.5) &85.3 (0.6)\\ 
\text{ijcnn1} &  &58.8 (0.8) & \textbf{73.8 (0.6)} & \textbf{72.7 (1.3)} &46.3 (3.0) &71.1 (0.8) &58.6 (0.9)\\ 
\text{MNIST} &  &68.9 (0.9) & \textbf{85.5 (0.4)} &82.2 (0.6) &84.4 (0.5) &68.7 (0.9) & \textbf{86.0 (0.5)}\\ 
\text{susy} & 0.5 &63.7 (0.8) & \textbf{73.7 (0.9)} &68.7 (0.4) &57.2 (0.8) &70.4 (0.5) &66.6 (1.0)\\ 
\text{cod-rna} &  &82.1 (0.7) &83.2 (0.6) & \textbf{85.7 (0.5)} &80.2 (0.7) & \textbf{85.5 (0.4)} &83.1 (0.6)\\ 
\text{magic} &  &63.9 (1.2) & \textbf{78.2 (0.6)} & \textbf{76.5 (0.8)} &69.3 (2.0) & \textbf{77.3 (0.6)} &74.9 (0.9)\\ 
\hline
\text{banana} &  &78.2 (0.7) & \textbf{82.3 (0.5)} &33.4 (1.0) &52.4 (1.8) &30.0 (0.0) &73.6 (1.1)\\ 
\text{ijcnn1} &  & \textbf{38.0 (0.6)} & \textbf{37.8 (0.7)} &30.0 (0.0) &30.0 (0.0) &35.0 (0.4) &30.0 (0.0)\\ 
\text{MNIST} & \pi &60.3 (1.0) &69.8 (0.7) & \textbf{81.2 (0.6)} &62.8 (1.2) &69.7 (0.8) &74.4 (1.0)\\ 
\text{susy} & 0.3 &54.7 (0.8) & \textbf{57.5 (0.9)} &38.6 (0.7) &30.0 (0.0) & \textbf{55.9 (0.4)} &43.8 (0.9)\\ 
\text{cod-rna} & &74.7 (1.0) &78.5 (0.6) & \textbf{82.4 (0.7)} &66.0 (1.3) &79.2 (0.9) &75.2 (1.2)\\ 
\text{magic} &  &45.9 (1.6) &60.6 (1.4) & \textbf{66.3 (1.6)} &30.0 (0.0) &56.9 (1.6) &49.0 (1.5)\\ 
\hline
\end{tabular}}
\end{table*}

\begin{table*}
\centering
\caption{Mean accuracy and standard error over 10 trials for PU classification, where  $\pi = 0.4$ and $\pitest=0.8$. Outperforming methods are highlighted in boldface using one-sided t-test with the significance level 5\%.}
\scalebox{0.8}{\begin{tabular} { |L|L|L|L|L|L|L|L|L|L|L|L|L|L|}
\hline
\text{Dataset} & \pig& \frac{u}{p}\text{uLSIF}& \frac{p}{u}\text{uLSIF}& \text{DH-Lin}& \text{DH-Ker}& \text{Sq-Lin}& \text{Sq-Ker}\\ \hline
\text{banana} &  &89.4 (0.5) & \textbf{90.6 (0.3)} &76.3 (1.5) &84.1 (0.5) &80.0 (0.0) &88.5 (0.3)\\ 
\text{ijcnn1} &  &80.0 (0.0) & \textbf{80.6 (0.1)} &80.0 (0.0) &80.0 (0.0) &80.0 (0.0) &80.0 (0.0)\\ 
\text{MNIST} & \pitest &85.6 (0.6) &86.8 (0.2) &83.3 (0.4) &85.1 (0.4) &67.9 (0.9) & \textbf{88.8 (0.3)}\\ 
\text{susy} & 0.8 &80.0 (0.0) & \textbf{81.3 (0.3)} &80.0 (0.0) &80.0 (0.0) &80.0 (0.0) &80.0 (0.0)\\ 
\text{cod-rna} &  &85.7 (0.5) &86.6 (0.5) & \textbf{88.5 (0.5)} &83.4 (0.5) & \textbf{88.0 (0.5)} &83.2 (0.5)\\ 
\text{magic} & &83.4 (0.5) & \textbf{84.8 (0.4)} & \textbf{84.0 (0.6)} &80.2 (0.1) & \textbf{84.3 (0.9)} &82.2 (0.3)\\ 
\hline
\text{banana} &  &84.2 (0.5) & \textbf{89.6 (0.6)} &74.5 (1.6) &87.4 (0.7) &78.5 (0.9) & \textbf{88.5 (0.4)}\\ 
\text{ijcnn1} &  &77.5 (1.3) & \textbf{80.4 (0.6)} &80.0 (0.0) &79.7 (0.3) & \textbf{81.5 (0.5)} &77.0 (0.6)\\ 
\text{MNIST} &  &72.8 (1.1) & \textbf{88.2 (0.5)} &82.0 (0.6) & \textbf{88.3 (0.6)} &67.5 (1.0) & \textbf{89.2 (0.5)}\\ 
\text{susy} & 0.6 &68.9 (0.7) &77.2 (0.7) & \textbf{80.2 (0.1)} & \textbf{80.0 (0.1)} &76.9 (1.0) &75.6 (0.6)\\ 
\text{cod-rna} &  &82.1 (1.4) &83.9 (0.6) & \textbf{87.0 (0.4)} &82.0 (0.8) & \textbf{87.6 (0.4)} &84.4 (0.7)\\ 
\text{magic} & &76.3 (1.0) & \textbf{82.7 (0.8)} & \textbf{81.5 (0.9)} & \textbf{82.7 (0.4)} & \textbf{82.5 (1.0)} & \textbf{80.7 (1.0)}\\ 
\hline
\text{banana} &  &78.6 (0.7) & \textbf{83.6 (0.8)} &26.1 (1.3) &66.6 (2.4) &21.8 (0.9) &80.9 (0.8)\\ 
\text{ijcnn1} &  &36.2 (0.8) & \textbf{46.5 (1.1)} &20.6 (0.2) &20.0 (0.0) &40.3 (0.6) &22.9 (1.0)\\ 
\text{MNIST} & \pi &57.2 (1.7) &76.7 (1.0) & \textbf{81.3 (0.4)} &75.1 (0.6) &68.7 (1.1) & \textbf{81.2 (0.4)}\\ 
\text{susy} & 0.4 &53.4 (0.3) & \textbf{61.6 (0.7)} &46.2 (1.0) &24.0 (1.2) &56.7 (1.1) &54.4 (0.8)\\ 
\text{cod-rna} &  &75.6 (2.0) &79.4 (0.8) & \textbf{83.0 (0.8)} &74.4 (0.9) & \textbf{83.2 (0.8)} &78.5 (0.8)\\ 
\text{magic} &  &51.9 (1.3) & \textbf{73.4 (0.7)} & \textbf{73.9 (0.7)} &45.4 (3.2) & \textbf{73.9 (0.9)} &66.6 (0.8)\\ 
\hline
\end{tabular}}
\end{table*}

\begin{table*}
\centering
\caption{Mean accuracy and standard error over 10 trials for PU classification, where  $\pi = 0.8$ and $\pitest=0.4$. Outperforming methods are highlighted in boldface using one-sided t-test with the significance level 5\%.}
\scalebox{0.8}{\begin{tabular} { |L|L|L|L|L|L|L|L|L|L|L|L|L|L|}
\hline
\text{Dataset} & \pig& \frac{u}{p}\text{uLSIF}& \frac{p}{u}\text{uLSIF}& \text{DH-Lin}& \text{DH-Ker}& \text{Sq-Lin}& \text{Sq-Ker}\\ \hline
\text{banana} &  &77.3 (1.1) & \textbf{83.2 (0.6)} &59.7 (0.7) &72.3 (2.2) &57.9 (1.6) &77.6 (2.1)\\ 
\text{ijcnn1} &  &57.9 (0.8) &57.9 (0.9) & \textbf{60.1 (0.1)} &59.6 (0.3) & \textbf{61.3 (0.9)} &55.9 (1.3)\\ 
\text{MNIST} & \pitest &74.1 (0.4) &76.9 (0.4) &69.8 (0.7) & \textbf{79.2 (0.8)} &55.2 (1.3) &75.6 (1.0)\\ 
\text{susy} & 0.4&67.2 (0.4) & \textbf{70.5 (0.6)} &66.5 (0.5) &61.2 (0.5) & \textbf{69.0 (1.7)} & \textbf{68.9 (1.0)}\\ 
\text{cod-rna} &  &79.2 (1.1) &82.2 (0.9) & \textbf{87.7 (0.6)} &74.6 (0.8) & \textbf{85.8 (1.4)} &78.9 (1.1)\\ 
\text{magic} &  &63.3 (0.8) &69.8 (1.0) & \textbf{73.1 (0.8)} &65.1 (1.2) & \textbf{74.7 (1.1)} &67.6 (0.8)\\ 
\hline
\text{banana} &  &80.4 (1.4) & \textbf{86.2 (0.5)} &44.1 (2.5) &59.0 (3.1) &43.3 (1.8) &74.4 (1.7)\\ 
\text{ijcnn1} &  &43.8 (0.6) & \textbf{48.0 (2.2)} &40.4 (0.2) &40.7 (0.3) & \textbf{49.7 (0.7)} &43.1 (0.8)\\ 
\text{MNIST} &  &77.8 (0.8) & \textbf{79.9 (0.6)} &69.4 (0.7) &76.5 (1.2) &56.6 (1.3) &74.8 (1.2)\\ 
\text{susy} & 0.6 & \textbf{59.3 (2.6)} & \textbf{62.7 (2.1)} &44.1 (1.1) &41.7 (1.2) & \textbf{61.1 (1.6)} & \textbf{57.5 (2.4)}\\ 
\text{cod-rna} &  &76.6 (1.8) &81.3 (1.2) & \textbf{86.0 (0.7)} &73.3 (0.4) & \textbf{85.1 (0.5)} &73.4 (0.4)\\ 
\text{magic} &  &62.6 (1.2) &67.9 (1.5) & \textbf{71.2 (0.8)} &51.1 (1.7) & \textbf{69.9 (0.8)} &62.8 (1.3)\\ 
\hline
\text{banana} &  & \textbf{77.3 (2.2)} & \textbf{80.8 (1.2)} &40.0 (0.6) &45.0 (1.6) &40.0 (0.0) &57.0 (2.1)\\ 
\text{ijcnn1} &  & \textbf{40.0 (0.0)} & \textbf{40.7 (0.4)} & \textbf{40.0 (0.0)} & \textbf{40.0 (0.0)} & \textbf{40.2 (0.1)} & \textbf{40.0 (0.0)}\\ 
\text{MNIST} & \pi &62.6 (2.3) &61.6 (2.1) & \textbf{68.6 (1.3)} &49.6 (2.2) &62.6 (1.2) & \textbf{67.3 (1.0)}\\ 
\text{susy} & 0.8 & \textbf{40.0 (0.0)} & \textbf{42.8 (1.8)} & \textbf{40.0 (0.0)} & \textbf{40.0 (0.0)} & \textbf{40.8 (0.3)} & \textbf{40.1 (0.1)}\\ 
\text{cod-rna} &  &63.5 (1.8) &66.7 (3.0) & \textbf{74.3 (1.1)} &51.8 (3.2) &70.3 (1.3) &59.3 (2.1)\\ 
\text{magic} &  &44.8 (1.1) & \textbf{57.5 (2.2)} & \textbf{55.2 (1.1)} &40.1 (0.1) & \textbf{57.4 (0.8)} &46.8 (1.1)\\
\hline
\end{tabular}}
\end{table*}
\end{document}